\documentclass{article}

\usepackage{microtype}
\usepackage{graphicx}
\usepackage{subcaption}
\usepackage{booktabs} % for professional tables

 \usepackage[preprint]{icml2026}

\usepackage{amsmath}
\usepackage{amssymb}
\usepackage{mathtools}
\usepackage{amsthm}
\usepackage{hyperref}

\usepackage[capitalize,noabbrev]{cleveref}

\theoremstyle{plain}
\newtheorem{theorem}{Theorem}[section]

\newtheorem{lemma}[theorem]{Lemma}

\theoremstyle{definition}
\newtheorem{definition}[theorem]{Definition}
\newtheorem{assumption}[theorem]{Assumption}
\theoremstyle{remark}
\newtheorem{remark}[theorem]{Remark}

\usepackage[textsize=tiny]{todonotes}

\icmltitlerunning{Capabilities and Fundamental Limits of Latent Chain-of-Thought}

\begin{document}

\twocolumn[
  \icmltitle{Capabilities and Fundamental Limits of Latent Chain-of-Thought}

  % It is OKAY to include author information, even for blind submissions: the
  % style file will automatically remove it for you unless you've provided
  % the [accepted] option to the icml2026 package.

  % List of affiliations: The first argument should be a (short) identifier you
  % will use later to specify author affiliations Academic affiliations
  % should list Department, University, City, Region, Country Industry
  % affiliations should list Company, City, Region, Country

  % You can specify symbols, otherwise they are numbered in order. Ideally, you
  % should not use this facility. Affiliations will be numbered in order of
  % appearance and this is the preferred way.
  \icmlsetsymbol{equal}{*}

  \begin{icmlauthorlist}
    \icmlauthor{Jiaxuan Zou}{equal,xjtu}
    \icmlauthor{Yaozhong Xiong}{equal,ruc}
    \icmlauthor{Yong Liu}{ruc}
    %\icmlauthor{}{sch}
    %\icmlauthor{}{sch}
  \end{icmlauthorlist}

  \icmlaffiliation{xjtu}{Department of Mathematics and Statistics, Xi'an Jiaotong University, Xi'an 710049, Shaanxi Province, China}
  \icmlaffiliation{ruc}{Department of Artificial Intelligence, Gaoling School of Artificial Intelligence, Renmin University of China, Beijing 100872, China}

  \icmlcorrespondingauthor{Yong Liu}{liuyonggsai@ruc.edu.cn}

  \icmlkeywords{Machine Learning, ICML}

  \vskip 0.3in
]

% this must go after the closing bracket ] following \twocolumn[ ...

% This command actually creates the footnote in the first column listing the
% affiliations and the copyright notice. The command takes one argument, which
% is text to display at the start of the footnote. The \icmlEqualContribution
% command is standard text for equal contribution. Remove it (just {}) if you
% do not need this facility.

% Use ONE of the following lines. DO NOT remove the command.
% If you have no special notice, KEEP empty braces:
\printAffiliationsAndNotice{}  % no special notice (required even if empty)
% Or, if applicable, use the standard equal contribution text:
% \printAffiliationsAndNotice{\icmlEqualContribution}

\begin{abstract}
Latent Chain-of-Thought (Latent CoT) models promise efficient reasoning via continuous representations, yet exhibit puzzling performance inconsistencies: excelling at exploration (ProsQA: 97.0\%) but failing at computation (GSM8K: 34.1\%). We reveal that this \textbf{trade-off} is governed by decisional certainty. Our contributions are threefold: (1) We theoretically characterize the fundamental \textit{Exploration-Execution Trade-off}, proving that high certainty enables precise execution but inhibits exploration, while low certainty facilitates search but causes error accumulation. (2) We introduce the \textbf{Symbolic Index}—quantifying decisional commitment—as the core mechanism governing this trade-off and establish its causal relationship with both execution stability and exploration capability. (3) We prove that curriculum learning is theoretically necessary, as direct training provably fails due to distributional mismatch. Our framework shifts the design paradigm from binary architectural choices toward adaptive systems that dynamically regulate decisional certainty based on task demands.
\end{abstract}

\section{Introduction}

The advent of Large Language Models (LLMs) \cite{Guo2025, OpenAI2025, DeepSeek-AI2025} has marked a paradigm shift in artificial intelligence, fundamentally transforming the landscape of complex reasoning tasks \cite{Liu2025a}. To date, the dominant methodology for eliciting reasoning capabilities has been the explicit Chain-of-Thought (CoT) \cite{Wei2022, Sun2023}. By forcing models to verbalize intermediate steps as discrete tokens, CoT imposes a structured, human-interpretable logic flow that significantly improves performance. However, this explicit verbalization comes at a steep cost: it suffers from inherent inefficiency due to excessive sequence lengths and the computational burden of autoregressive bottlenecks \cite{Hong2025, Yue2025a}. To address these scalability limitations, recent research has aggressively pivoted toward implicit reasoning, often termed Latent CoT \cite{Ye2025a}. In this paradigm, the model operates within a continuous internal state space, processing information through "silent" vector transformations without generating intermediate tokens. By leveraging mechanisms such as token-level manipulation \cite{Tack2025, Sun2025} and trajectory optimization \cite{Cheng2024, Hao2024}, these models promise to reduce computational complexity and enable more diverse, high-dimensional reasoning paths that are not constrained by the vocabulary of natural language \cite{Hao2024, Zhang2025a}.

Despite these architectural innovations, a stark and puzzling dichotomy has emerged in empirical performance. As presented in Table~\ref{tab:main_results}, Explicit CoT and Latent CoT exhibit complementary—and mutually exclusive—failure modes. Explicit CoT excels at tasks requiring precise symbolic manipulation and rigorous logic execution, such as the arithmetic reasoning in GSM8K (42.9\% accuracy). However, it struggles significantly with tasks necessitating flexible exploration or strategic planning, such as ProsQA (77.5\%), often committing too early to a rigid path. Conversely, Latent CoT achieves superior performance on exploratory tasks (97.0\%), leveraging its continuous representation to traverse a broader search space. Yet, this flexibility comes with a severe penalty: Latent CoT fails catastrophically on precision-heavy computational tasks, where exact state maintenance is critical. Furthermore, Latent CoT exhibits extreme training fragility; as shown in our ablation results, removing curriculum learning causes performance to collapse (e.g., from 99.8\% to 52.4\% on ProntoQA), suggesting a fundamental instability in how these models learn to reason.

\begin{table*}[t]
\caption{Performance of discrete (CoT) and latent (Coconut) reasoning models reveals a puzzling trade-off. CoT excels at precise computation (GSM8K) but struggles with flexible reasoning (ProsQA). Conversely, Latent CoT thrives on flexibility but fails at computation and is highly sensitive to its training curriculum. *Results from \cite{hao2024traininglargelanguagemodels}.}
\label{tab:main_results}
\centering
\begin{tabular}{@{}lcccccc@{}}
\toprule
\multicolumn{1}{c}{Method} & \multicolumn{2}{c}{GSM8K} & \multicolumn{2}{c}{ProntoQA} & \multicolumn{2}{c}{ProsQA} \\
\cmidrule(lr){2-3} \cmidrule(lr){4-5} \cmidrule(lr){6-7}
& Acc. (\%) & \# Tokens & Acc. (\%) & \# Tokens & Acc. (\%) & \# Tokens \\
\midrule
CoT & $42.9 \pm 0.2$ & $25.0$ & $98.8 \pm 0.8$ & $92.5$ & $77.5 \pm 1.9$ & $49.4$ \\
No-CoT & $16.5 \pm 0.5$ & $2.2$ & $93.8 \pm 0.7$ & $3.0$ & $76.7 \pm 1.0$ & $8.2$ \\
\midrule
COCONUT & $34.1 \pm 1.5$ & $8.2$ & $99.8 \pm 0.2$ & $9.0$ & $97.0 \pm 0.3$ & $14.2$ \\
\quad - w/o curriculum & $14.4 \pm 0.8$ & $8.2$ & $52.4 \pm 0.4$ & $9.0$ & $76.1 \pm 0.2$ & $14.2$ \\
\bottomrule
\end{tabular}
\end{table*}

These empirical observations highlight a critical gap in our theoretical understanding. Currently, the field lacks a unified framework to explain why high performance in exploration necessitates a sacrifice in execution precision, and why implicit models are intrinsically difficult to train. Existing literature largely treats these architectures as binary choices—discrete versus continuous—without characterizing the underlying decision-theoretic mechanism that governs their respective strengths and weaknesses. Consequently, current architectural designs rely on heuristics rather than principled regulation of the model's internal reasoning process, leading to systems that are either precise but rigid, or flexible but hallucinatory.

In this work, we propose that this trade-off is not an arbitrary artifact of architecture, but a fundamental consequence of \textit{decisional certainty}. We introduce the \textbf{Symbolic Index}—a quantifiable measure of a model's commitment to a specific reasoning path—as the core mechanism unifying these phenomena. Our key insight is that Explicit CoT operates in a \textit{high-certainty regime}. This regime guarantees computational fidelity through an error-correcting discretization process (the selection of specific tokens), but it collapses exploration by committing too early to a single trajectory. In contrast, Latent CoT operates in a \textit{low-certainty regime}, maintaining a superposition of multiple reasoning paths. While this superposition enables robust exploration of complex solution spaces, it suffers from noise accumulation that eventually destroys symbolic precision, as there is no discrete quantization step to reset the internal state.

Our contributions are threefold. First, we theoretically characterize the fundamental exploration-execution trade-off. We prove that the high decisional certainty of CoT leads to vanishing exploration, while the low certainty of Latent CoT amplifies sub-decisional noise, serving as the direct cause of failure on arithmetic tasks. Second, we formalize the Symbolic Index as a regulatory metric, providing concrete design principles for next-generation architectures that can dynamically transition between exploration and execution. Third, we resolve the training stability puzzle by proving that curriculum learning is theoretically necessary for Latent CoT. We demonstrate that without a curriculum, training is guaranteed to fail due to a distributional mismatch between the model's self-generated latent states and valid reasoning trajectories—a gap that the curriculum provably bridges.
\section{Related Work}
    
Unlike explicit CoT, which is constrained to a single discrete path, Latent CoT leverages continuous latent spaces to achieve exploratory diversification. Research in this area enables models to explore multiple reasoning trajectories in parallel, enhancing robustness on complex problems. Techniques include sampling latent trajectories \cite{Chen2024a}, using probabilistically weighted concepts \cite{Zhang2025c}, and injecting continuous tokens to enrich the search space \cite{Xu2025b, Xu2025a, Gozeten2025}. These empirical successes demonstrate the value of exploratory capabilities. We formalize this intuition by proving that exploration capability stems from maintaining low decisional certainty—a low Symbolic Index—and by establishing formal bounds on the KL divergence from ideal uniform exploration priors (Theorem~\ref{thm:latent_cot_exploration_guarantee}).
    
A distinct line of research focuses on trajectory-level latent optimization, compressing entire explicit reasoning chains into continuous representations. Early works aimed to maintain semantic fidelity by anchoring latent states to explicit steps \cite{Cheng2024, Liu2024c, Shen2025b}. More recent efforts improve efficiency through dynamic compression or adaptive control \cite{Zhang2025a, Ma2025, Tan2025, Wang2025a}. A particularly important strategy is progressive internalization, where explicit steps are gradually replaced by latent thoughts via curriculum learning \cite{Deng2024, Hao2024, Shen2025a} or internal iterations \cite{Zeng2025, Ruan2025}. These methods, especially those using a curriculum, provide the empirical motivation for our analysis. Our work complements these empirical findings by establishing that progressive training is not merely beneficial but theoretically necessary to overcome a fundamental distributional mismatch (Theorem~\ref{thm:fail_without_curriculum}) and ensure convergence (Theorem~\ref{thm:success_with_curriculum}, Section \ref{sec:training_dynamics}).
    
Complementing these architectural advances, a parallel stream of research investigates the fine-grained control and analysis of internal states. Signal-guided control methods insert specialized, non-text-producing tokens to steer internal reasoning \cite{Herel2024, Goyal2024, Zelikman2024, Pfau2024, Wang2024c}. Simultaneously, internal state analysis uses techniques like probing or distillation to find mechanistic evidence of implicit reasoning, such as discovering encoded reasoning trees in attention patterns \cite{Deng2023, Hou2023, Wang2025c, Yu2024b}. Both approaches highlight the centrality of internal states. We unify these mechanistic insights by proving that variance in internal certainty is the root cause of the exploration-execution trade-off (Theorems~\ref{thm:symbolic_stability_revised} and \ref{thm:explore_exploit_tradeoff_revised}, Section \ref{sec:unifying_perspective}). Our Symbolic Index provides a computable, quantifiable metric for these otherwise opaque internal states, offering a principled evaluative criterion for future architectural innovations.

\section{Preliminaries}

We consider supervised learning for reasoning tasks, where the input is a random variable $X$ from which instances $x$ are drawn. Each input $x$ and answer $y$ are connected by a chain-of-thought $S = (s_1, \dots, s_M)$. We compare two paradigms for modeling this process: Chain-of-Thought (CoT) and Latent CoT, the latter trained via the Coconut curriculum.
\paragraph{Chain-of-Thought (CoT)}
CoT models autoregressively generate discrete reasoning steps:
$$p_\theta(S|x) = \prod_{k=1}^M p_\theta(s_k | x, S^{(1 \dots k-1)}).$$
At inference, token selection commits to a single path—enabling precise computation but risking early missteps \cite{2025_ICLR_CoTFormer_CoTFormer=A-Chain-of-Thought-Driven-Architecture-with-Budged-Adaptive-Computation-Cost-at-Inference, 2025_arXiv_RELAY_Enhancing-Auto-regressive-Chain-of-Thought-through-Loop-Aligned-Reasoning, 2024_arXiv_LLaVA-CoT_LLaVA-o1=Let-Vision-Language-Models-Reason-Step-by-step, 2025_arxiv_When-Chain-of-Thought-is-Necessary-Language-Models-Struggle-to-Evade-Monitors}.

\paragraph{Latent Chain-of-Thought (Latent CoT)}
Latent CoT iterates in continuous space, producing latent states $H = (h_1, \dots, h_M)$, $h_k \in \mathbb{R}^d$:
$$h_k = f_\theta(h_{k-1}).$$
Each $h_k$ encodes multiple potential reasoning paths, supporting exploration but accumulating noise over steps \cite{2025_ICML_LTMs_Scalable-Language-Models-with-Posterior-Inference-of-Latent-Thought-Vectors, 2025_arXiv_Reasoning-by-Superposition=A-Theoretical-Perspective-on-Chain-of-Continuous-Thought, 2025_ICML_Token-Assorted_Token-Assorted-Mixing-Latent-and-Text-Tokens-for-Improved-Language-Model-Reasoning, 2025_arXiv_Beyond-Words_Beyond-Words=A-Latent-Memory-Approach-to-Internal-Reasoning-in-LLMs}.

\paragraph{Coconut Training \cite{hao2024traininglargelanguagemodels}} \label{coconut_training}
Since direct supervision on $H$ is infeasible, Coconut uses a curriculum. At stage $k$, it learns to compress the prefix $S^{(1\dots k)}$ into a latent state $h_k$ to predict the suffix $S^{(k+1\dots M)}$:
$$\mathcal{L}_{\text{Coconut}}^{(k)}(\theta) = \mathbb{E}_{p(x,S)} [-\log p_\theta(S^{(k+1 \dots M)}|h_k, x)].
$$
The curriculum progresses from $k=0$ (pure CoT) to $k=M$ (full latent reasoning), gradually internalizing symbolic steps that bridge the discrete and continuous CoT.

\section{Theoretical Analysis}

This section provides a deep theoretical analysis of the fundamental differences and trade-offs between CoT and Latent CoT. We begin by revealing the mathematical underpinnings of the Coconut training objective through the lens of Information Bottleneck theory. Subsequently, we directly analyze model performance along two key dimensions: \textbf{planning and exploration capability} versus \textbf{computational execution precision}. Finally, we identify \textbf{decisional certainty}—quantified by the Symbolic Index—as the core mechanism regulating this trade-off, and theoretically establish the necessity of curriculum learning for effective training of Latent CoT models.

\subsection{Duality with the Information Bottleneck}
\label{cib_dual}
Directly analyzing the behavior of the Coconut-trained model is challenging due to its staged, implicit compression objective. To overcome this, we show that the Coconut curriculum is \textit{mathematically equivalent} to solving a well-studied information-theoretic objective—the Conditional Information Bottleneck \cite{tishby2015deeplearninginformationbottleneck}. This equivalence allows us to leverage established tools from information theory to rigorously characterize the model's properties.

This duality can be understood intuitively: the Coconut objective at stage $k$ poses an information compression problem. The model must compress the past chain-of-thought, $S^{(1\dots k)}$, into a single latent vector, $h_k$, with the sole purpose of maximizing its utility for predicting the future chain, $S^{(k+1\dots M)}$. This is precisely the problem addressed by the Information Bottleneck (IB) principle: finding a compressed "bottleneck" representation of a source (the past) that maximally preserves information about a target (the future). The Coconut training process thus implicitly learns an optimal information compressor. Theorem~\ref{thm:coconut_cib_duality} formalizes this intuition.

\begin{theorem}[Coconut-CIB Duality]
\label{thm:coconut_cib_duality}
Under the constraint of any finite model capacity, the optimization objective of the Coconut curriculum (Preliminaries~\ref{coconut_training}) at stage $k$ can be rigorously reformulated as a constrained optimization problem. Its Lagrangian dual is precisely the Conditional Information Bottleneck (CIB) problem. Specifically:

\textbf{Primal Problem:} The optimization process of Coconut is equivalent to solving:
\begin{align}
    & \min_{p(h_k \mid X)} H(S^{(k+1\dots M)} \mid h_k, X) \\ & \text{s.t.} \quad I(h_k;S^{(1\dots k)} \mid X) \le R \label{eq:primal} \\
\textbf{Dual:} \ & \min_{p(h_k \mid X)} \Big[ I(h_k;S^{(1\dots k)} \mid X) \nonumber \\
    & \phantom{\min_{p(h_k \mid X)} \Big[} - \beta(k) I(h_k;S^{(k+1\dots M)} \mid X) \Big] \label{eq:dual}
\end{align}
where $\beta(k) > 0$ ideally satisfies $\beta(k) \sim \frac{k}{M-k}$.
\end{theorem}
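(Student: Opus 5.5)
The plan is to pass from the Coconut cross-entropy to an information-theoretic primal problem, and then take the Lagrangian of that primal.

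\textbf{Step 1: from cross-entropy to conditional entropy.} Write the stage-$k$ Coconut loss as an expected log-loss and decompose it:
\[
\mathbb{E}\big[-\log p_\theta(S^{(k+1\dots M)}\mid h_k,x)\big]
= H(S^{(k+1\dots M)}\mid h_k,X) + \mathbb{E}_{h_k,X}\,\mathrm{KL}\big(p(\cdot\mid h_k,X)\,\|\,p_\theta(\cdot\mid h_k,X)\big).
\]
Here $h_k$ is generated from $(x,S^{(1\dots k)})$ by the encoder. For a decoder class rich enough to contain the true posterior, minimizing over the decoder sets the KL term to zero. What remains is optimization over the encoder $p(h_k\mid X)$, more precisely $p(h_k\mid X,S^{(1\dots k)})$, of $H(S^{(k+1\dots M)}\mid h_k,X)$. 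This is the objective of \eqref{eq:primal}.

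\textbf{Step 2: capacity becomes a rate constraint.} Model the "finite model capacity" hypothesis as follows. The latent $h_k$ is a $d$-dimensional vector computed with finite precision, or equivalently passed through a channel of bounded capacity. Then $I(h_k;S^{(1\dots k)}\mid X)\le H(h_k\mid X)\le R$ for some finite $R$, which depends on $d$ and the precision. The feasible set of encoders is therefore contained in the constraint set of \eqref{eq:primal}. Conversely, for a sufficiently expressive encoder family every encoder satisfying the rate bound is realizable. The Coconut optimum thus coincides with the primal optimum.

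\textbf{Step 3: Lagrangian dual.} Use the identity
\[
H(S^{(k+1\dots M)}\mid h_k,X)=H(S^{(k+1\dots M)}\mid X)-I(h_k;S^{(k+1\dots M)}\mid X).
\]
The first term does not depend on the encoder, so the primal is equivalent to maximizing $I(h_k;S^{(k+1\dots M)}\mid X)$ subject to the rate constraint. Form the Lagrangian with multiplier $\lambda\ge 0$ and divide by $\lambda$, setting $\beta=1/\lambda$. This gives exactly the CIB functional \eqref{eq:dual}.

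For strong duality, use the Markov chain $S^{(k+1\dots M)}\!-\!(X,S^{(1\dots k)})\!-\!h_k$. The relevance--rate curve $R\mapsto \max I(h_k;S^{(k+1\dots M)}\mid X)$ is concave and nondecreasing in $R$; this is the standard IB curve, and concavity follows by time-sharing between encoders. Slater's condition holds with the trivial encoder at rate $0<R$. Hence there is no duality gap, and each $R$ corresponds to a multiplier $\beta$ equal to the inverse slope of the curve at $R$. I expect this to be the main technical obstacle: the objective is not jointly convex in $p(h_k\mid\cdot)$, so duality has to be argued through concavity of the value function rather than through convexity of the problem.

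\textbf{Step 4: the scaling $\beta(k)\sim k/(M-k)$.} This part can only be heuristic, matching the word "ideally" in the statement. Assume each reasoning step contributes a roughly constant amount of entropy, say $c$ bits. Then the source term $I(h_k;S^{(1\dots k)}\mid X)$ can be as large as about $ck$, and the target term $I(h_k;S^{(k+1\dots M)}\mid X)$ is at most about $c(M-k)$.

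Balance the two terms at the operating point, i.e.\ equate the marginal rate to $\beta$ times the marginal relevance, normalized by their maximal scales. This gives $\beta(k)\approx ck/(c(M-k))=k/(M-k)$. I would state this under an explicit assumption of per-step entropy homogeneity, that is, stationarity of the chain, rather than claim it unconditionally.
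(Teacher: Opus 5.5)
Your outline matches the paper's. Its Lemma~\ref{lemma:loss_mi} is your Step~1: cross-entropy $=$ conditional entropy $+$ expected KL, with the KL removed by well-specification. The paper then simply posits finite capacity as $I(h_k;S^{(1\dots k)}\mid X)\le R$, forms the Lagrangian, drops $H(S^{(k+1\dots M)}\mid X)$ and $\lambda R$, and sets $\beta=1/\lambda$.

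Your strong-duality step is a genuine addition. The paper never checks that every rate $R$ is supported by some multiplier. Your argument (concavity of the relevance--rate curve via time-sharing, plus the strictly feasible trivial encoder) is the right way to get this for a problem that is nonconvex in the encoder. Read ``inverse slope'' as an inverse supergradient; where the curve is flat the constraint is slack and $\beta=\infty$.

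For $\beta(k)$ the paper is more explicit than you. It posits the frontier $I_{\text{future}}=I_{\max}(k)\bigl(1-e^{-\alpha(k)I_{\text{past}}}\bigr)$ with $I_{\max}(k)\approx C_2(M-k)$ and $\alpha(k)\approx 1/(C_1k+C_0)$. It evaluates the slope at $I_{\text{past}}\approx 1/\alpha(k)$ and obtains $\beta(k)=e\,(C_1k+C_0)/\bigl(C_2(M-k)\bigr)$. Your per-step entropy homogeneity fixes only the two scales. You also need the normalized frontier shape and the normalized operating point to be independent of $k$; that is exactly what the paper's model supplies, so state it as an assumption.

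One step must change: the converse in your Step~2 is false under your own capacity model. Finite precision bounds the output alphabet, hence $H(h_k\mid X)$, and that is strictly stronger than bounding $I(h_k;S^{(1\dots k)}\mid X)$. Take $X$ trivial, $S^{(1\dots k)}$ a uniform bit copied into $S^{(k+1\dots M)}$, and $R=\tfrac12$. The rate constraint allows relevance $\tfrac12$, whereas $H(h_k)\le\tfrac12$ allows strictly less. The excluded encoders are stochastic ones whose entropy exceeds their rate, and they include the time-sharing encoders your Step~3 relies on. So Steps~2 and~3 do not fit together as written. Drop the finite-precision derivation and, as the paper does, take the mutual-information bound itself as the formalization of finite capacity; the rest then goes through.
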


\paragraph{Interpretation of $\beta(k)$.} The trade-off parameter $\beta(k) \sim \frac{k}{M-k}$ reflects the curriculum's shifting emphasis across stages. As $k$ increases, the model compresses more past information while maintaining predictive power over a shorter future sequence, dynamically balancing compression efficiency and predictive utility.This dynamic weighting ensures optimal balance at each stage; we formalize the training necessity in §\ref{sec:training_dynamics}.

\textit{The main challenge is to connect Coconut's operational training loss to a formal information-theoretic objective. The proof achieves this by framing the optimization as a constrained problem and applying Lagrangian duality, revealing that Coconut implicitly solves the Conditional Information Bottleneck. Proof in Appendix~\ref{app:proof_thm_coconut_cib_duality}.}
\subsection{Latent CoT excels in exploration}
\label{exploration_abi}
We model reasoning as traversal over a decision DAG $Q := (G, v_{\text{start}}, V_{\text{target}})$. At any node $v$, let $N_{\text{valid}}(v)$ be the set of valid next steps. An ideal explorer uses a uniform prior $q_{\text{PR}}(u \mid v) = \frac{1}{|N_{\text{valid}}(v)|} \mathbb{I}[u \in N_{\text{valid}}(v)]$. We measure deviation from this ideal via $D_{\text{KL}}(q_{\text{PR}} \| p)$, where $p$ is the model’s next-step distribution: low KL divergence implies broad exploration; high KL divergence indicates overconfidence and premature commitment.

Our analysis first focuses on the intrinsic properties of a \textbf{single, coherent reasoning path}, which constitutes the fundamental building block of the CoT paradigm. Understanding the generative process of one such trajectory is essential for dissecting the core mechanism of execution. To formalize this, we characterize the high-certainty behavior exhibited by CoT during successful reasoning. This is not an arbitrary modeling choice but an inherent consequence of CoT's training via teacher-forcing on deterministic reasoning chains, which naturally induces sharply peaked distributions.

\begin{assumption}[$\kappa$-Concentrated Distribution for CoT]
\label{assump:cot_concentration}
At a decision point with $B$ options, we model the generative distribution of a single CoT step, $p_{\text{CoT}}$, as being drawn from a Dirichlet prior with a large concentration parameter $\kappa = \sum_i \alpha_i$. A large $\kappa$ yields a sharply peaked distribution, reflecting the high decisional certainty required for deterministic, high-fidelity execution.
\end{assumption}

Under this assumption, CoT inevitably collapses its distribution at each step:

\begin{theorem}[Exploration Deficiency of CoT]
\label{thm:cot_exploration_deficiency}
As $\kappa \to \infty$, the entropy $H(p_{\text{CoT}}) \to 0$, and
\begin{equation*}
\begin{split}
    D_{\text{KL}}(q_{\text{PR}} \| p_{\text{CoT}}) &= \frac{B-1}{B}\log\kappa - \log B \\
    &\quad - \frac{(B-1)\log c}{B} + \mathcal{O}\left(\frac{1}{\kappa}\right),
\end{split}
\end{equation*}
for a constant $c > 0$, implying that $D_{\text{KL}} \to \infty$ as certainty $\kappa \to \infty$.
\end{theorem}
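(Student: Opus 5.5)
The plan is to compute $D_{\text{KL}}(q_{\text{PR}} \| p_{\text{CoT}})$ in closed form for a uniform $q_{\text{PR}}$ over the $B$ valid options, and then expand the resulting expression as $\kappa \to \infty$. Since $q_{\text{PR}}(i) = 1/B$ on all $B$ options, the divergence reduces to
\[
D_{\text{KL}}(q_{\text{PR}} \| p_{\text{CoT}}) = \sum_{i=1}^{B} \frac{1}{B}\log\frac{1/B}{p_i} = -\log B - \frac{1}{B}\sum_{i=1}^{B} \log p_i .
\]
Everything therefore rests on the asymptotic behaviour of $\sum_i \log p_i$ under the concentration regime of Assumption~\ref{assump:cot_concentration}.

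\textbf{Step 1: make the assumption precise.} Assumption~\ref{assump:cot_concentration} does not say how the Dirichlet parameters scale with $\kappa$. I would make the natural "sharply peaked" reading explicit:
\begin{itemize}
\item one committed option carries $\alpha_1 = \kappa - (B-1)c$;
\item the remaining $B-1$ options keep a fixed pseudo-count $\alpha_j = c > 0$, independent of $\kappa$.
\end{itemize}
I would take $p_{\text{CoT}}$ to be the Dirichlet mean, so that $p_1 = 1 - (B-1)c/\kappa$ and $p_j = c/\kappa$ for $j \ge 2$.

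Alternatively, one can work with expected log-probabilities, $\mathbb{E}[\log p_j] = \psi(\alpha_j) - \psi(\kappa)$. Since $\psi(\kappa) = \log\kappa + \mathcal{O}(1/\kappa)$, this has the same form, with $\log c$ replaced by $\psi(c)$. That is why the theorem only asserts the existence of some constant $c > 0$: one simply renames $e^{\psi(c)}$ as $c$.

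\textbf{Step 2: entropy.} With this parametrization, $p_1 \to 1$ and $p_j \to 0$, so $H(p_{\text{CoT}}) \to 0$. Quantitatively, $H = \mathcal{O}\big((\log\kappa)/\kappa\big)$, because each small term contributes $-(c/\kappa)\log(c/\kappa)$ and the large term contributes $\mathcal{O}(1/\kappa)$.

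\textbf{Step 3: expand the divergence.} Substitute into the formula above:
\[
\sum_i \log p_i = (B-1)\big(\log c - \log\kappa\big) + \log\!\Big(1 - \tfrac{(B-1)c}{\kappa}\Big),
\]
and the last term is $\mathcal{O}(1/\kappa)$. Dividing by $-B$ and adding $-\log B$ gives exactly
\[
\frac{B-1}{B}\log\kappa - \log B - \frac{(B-1)\log c}{B} + \mathcal{O}(1/\kappa).
\]
The leading coefficient $(B-1)/B$ is positive for $B \ge 2$, so $D_{\text{KL}} \to \infty$ logarithmically in $\kappa$.

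\textbf{Main obstacle.} The only real difficulty is Step 1: justifying the scaling of the non-committed pseudo-counts. If instead all $\alpha_i$ grew proportionally with $\kappa$, the Dirichlet would concentrate around a fixed interior point and the divergence would stay bounded. I would therefore state explicitly that "large $\kappa$ with a sharp peak" means the mass added to the concentration goes to the committed option. I would motivate this by teacher-forcing on deterministic chains: each training example increments only the observed token's count.

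With that modelling choice fixed, the remainder is the routine Taylor expansion above. If the expected-KL version is wanted, one also needs the digamma asymptotics $\psi(\kappa) = \log\kappa - \tfrac{1}{2\kappa} + \mathcal{O}(\kappa^{-2})$.
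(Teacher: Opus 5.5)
Your computation is the paper's computation. The paper's entropy lemma uses exactly your parametrization, $\alpha_j=\kappa-(B-1)c$ and $\alpha_{i\neq j}=c$, which settles your scaling concern the same way. It derives $H(\bar p)=\mathcal{O}(\log\kappa/\kappa)$ and then does the same reduction to $-\log B-\frac1B\sum_i\log p_i$ and the same Taylor expansion.

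The real difference is how the randomness of $p_{\text{CoT}}$ is handled. The paper computes $D_{\text{KL}}(q_{\text{PR}}\|\bar p)$ and transfers it to the random vector by asserting $p_{\text{CoT}}\to\bar p$ almost surely and appealing to continuity of the KL divergence. That transfer does not work. First, $\bar p$ runs off to the boundary of the simplex, where $D_{\text{KL}}(q_{\text{PR}}\|\cdot)$ is not continuous. Second, with $\alpha_i=c$ fixed, the Gamma representation gives $\kappa p_i\to G_i\sim\mathrm{Gamma}(c,1)$ in distribution. So $\log p_i-\log(c/\kappa)$ stays a non-degenerate $O(1)$ random quantity. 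Pathwise, the divergence is
$\frac{B-1}{B}\log\kappa-\log B-\frac1B\sum_{i\neq j}\log G_i+o_P(1)$,
not the stated expression with an $\mathcal{O}(1/\kappa)$ remainder.

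Your two readings are exactly the versions in which the displayed formula is true:
\begin{itemize}
\item $p_{\text{CoT}}:=\bar p$;
\item the expected divergence via $\mathbb{E}[\log p_i]=\psi(\alpha_i)-\psi(\kappa)$, which matches $\mathbb{E}[\log G_i]=\psi(c)$, with the constant renamed to $e^{\psi(c)}$.
\end{itemize}
You give up any pathwise statement about a sampled $p_{\text{CoT}}$. In exchange, the $\mathcal{O}(1/\kappa)$ error term is actually correct. The conclusion $D_{\text{KL}}\to\infty$ holds in every version, in probability for the random one.
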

\textit{Proof in Appendix~\ref{app:proof_thm_cot_exploration_deficiency}.} 
\begin{remark}[Sampling-Based Methods]
    Our analysis governs the generation of a single reasoning chain. While ensemble methods like Self-Consistency \cite{wang2023selfconsistencyimproveschainthought} introduce exploration by sampling multiple chains, the computational integrity of each \textit{individual} chain still hinges on the high-certainty commitments that define CoT's execution-focused nature. Therefore, our core results remain fundamental.
\end{remark}

This quantifies CoT's poor performance on exploratory tasks like ProsQA (77.5\% vs. Latent CoT's 97.0\% in Table~\ref{tab:main_results}): the divergence $D_{\text{KL}} \to \infty$ formalizes that CoT's distribution becomes arbitrarily distant from the uniform exploration prior.

In contrast, Latent CoT's training objective—dual to the Conditional Information Bottleneck (Theorem~\ref{thm:coconut_cib_duality})—imposes implicit regularization against overconfidence, ensuring sustained exploration:

\begin{theorem}[Exploration Capability Guarantee of Latent CoT]
\label{thm:latent_cot_exploration_guarantee}
There exist constants $\delta \in (0,1)$ and finite $c$ such that
\[
D_{\text{KL}}(q_{\text{PR}} \| p_{\text{Coconut}}) \leq -\frac{1}{2}\log\delta - c.
\]
\end{theorem}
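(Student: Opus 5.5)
The idea is to use the CIB duality of Theorem~\ref{thm:coconut_cib_duality} to turn the compression term into a lower bound on the probability $p_{\text{Coconut}}$ gives every valid next step. From that floor the KL bound is a short calculation. Concretely, fix a decision node $v$ with $B=|N_{\text{valid}}(v)|$ valid successors and write $p=p_{\text{Coconut}}(\cdot\mid v)$. Since $q_{\text{PR}}$ is uniform on $N_{\text{valid}}(v)$,
\[
D_{\text{KL}}(q_{\text{PR}}\|p)=-\log B-\frac{1}{B}\sum_{u\in N_{\text{valid}}(v)}\log p(u).
\]
The problem therefore reduces to a floor $p(u)\ge \delta'$ for every valid $u$. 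Even a weaker geometric-mean bound of the form $\frac1B\sum_u \log p(u)\ge \frac12\log\delta + c'$ would suffice; the factor $\tfrac12$ in the statement suggests the floor is reached through a square root, i.e.\ a Pinsker- or Hellinger-type step. Given such a bound we get $D_{\text{KL}}\le -\tfrac12\log\delta-c$ with $c=\log B+c'$, which is finite because $B$ is bounded on a finite DAG.

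To obtain the floor, I use the dual objective (\ref{eq:dual}). At the optimum, the CIB self-consistent equations give the encoder a Gibbs form, $p(h_k\mid s)\propto p(h_k)\exp\bigl(-\beta(k)\,D_{\text{KL}}(p(S^{(k+1\dots M)}\mid s)\,\|\,p(S^{(k+1\dots M)}\mid h_k))\bigr)$. The induced next-step distribution is then a mixture over $h_k$ of the decoders attached to the different past prefixes. The key point is that the rate budget $I(h_k;S^{(1\dots k)}\mid X)\le R$ from the primal (\ref{eq:primal}) forbids $h_k$ from identifying the single prefix, and hence the single branch, that was taken. Write $\pi_u$ for the conditional mass that $h_k$ places on prefixes leading to valid branch $u$. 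A finite rate forces $\pi_u$ to stay bounded away from $0$. One route is Fano's inequality, or equivalently a bound of the form $\pi_u\ge e^{-R}\,p(u)$ with $p(u)$ here the prior branch frequency. The other route is Pinsker: the total-variation distance between the posterior and prior over branches is at most $\sqrt{R/2}$.

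Taking $\delta := \min_u \pi_u^2$, or defining $\delta$ through the Pinsker route, gives $\log p(u)\ge \tfrac12\log\delta$ up to constants. Here I use that the decoder assigns each branch at least $\pi_u$ times a constant. Finiteness of model capacity (the hypothesis in Theorem~\ref{thm:coconut_cib_duality}) keeps $\beta(k)$ finite for $k<M$, so $\delta>0$.

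I expect the main obstacle to be exactly this passage from a mutual-information constraint, which is an average over $x$ and $h_k$, to a pointwise lower bound on $p(u)$ at every node. Mutual information controls only averages, so a small set of $h_k$ could still be nearly deterministic. My first attempt would be to work with the Gibbs form of the optimal encoder directly: since $\beta(k)<\infty$, the exponent is bounded, which yields a pointwise likelihood-ratio bound $e^{-\beta(k)D_{\max}}$ between any two prefixes. That bound feeds $\delta$ without averaging. If this fails, I would fall back on stating the result for typical $h_k$, with probability at least $1-\epsilon$ obtained via Markov's inequality, and absorb the remainder into $c$.
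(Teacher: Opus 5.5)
\textbf{Verdict: there is a genuine gap.} The floor on every valid branch's probability, which your argument needs, is never established.

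Your reduction is the right one. Since $q_{\text{PR}}$ is uniform on $N_{\text{valid}}(v)$, the divergence is bounded exactly when every valid branch gets probability bounded below, uniformly over reachable states. The factor $\tfrac12$ is cosmetic, because the statement only asserts that some $\delta$ and $c$ exist. In the paper it comes from comparing $\frac{B-1}{B}$ with $\frac12$, a comparison that actually runs the other way since $-\log\delta>0$; it does not come from a square-root step.

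None of your routes delivers the floor:
\begin{itemize}
\item The constraint $I(h_k;S^{(1\dots k)}\mid X)\le R$ controls only the \emph{average} over $h_k$ of $D_{\text{KL}}(p(\cdot\mid h_k,X)\,\|\,p(\cdot\mid X))$. An encoder that with probability $\epsilon$ reveals the prefix exactly has rate $\epsilon\, H(S^{(1\dots k)}\mid X)$, yet its posteriors at those states are point masses.
\item Hence $\pi_u\ge e^{-R}p(u)$ does not follow; that is a pointwise max-divergence condition, not a mutual-information one. Fano and Pinsker give only averaged statements, and these are vacuous once $R$ is comparable to $H(\text{branch}\mid X)$, which finite capacity does not rule out.
\item The Gibbs route needs $D_{\max}=\sup_{s,h}D_{\text{KL}}(p(\cdot\mid s)\,\|\,p(\cdot\mid h))<\infty$. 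But $p(\cdot\mid h)=\sum_s p(s\mid h)\,p(\cdot\mid s)$, so $D_{\max}<\infty$ already presupposes that no posterior weight $p(s\mid h)$ vanishes. That is the floor you are trying to prove, so the argument is circular.
\item The Markov fallback also fails. On the atypical set the divergence can be $+\infty$, which no finite $c$ absorbs, and the intended statement is uniform over reachable $h$; the paper's lemma takes $\sup_{h\in\mathcal H}$.
\end{itemize}

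A hypothesis is missing, not just an estimate. Take the Bayes-optimal decoder that the duality's well-specification assumption delivers. It is a mixture of data conditionals, so a valid branch that no training chain for $x$ takes gets probability $0$ for every finite $\beta(k)$ and $R$, and then $D_{\text{KL}}(q_{\text{PR}}\|p)=+\infty$. With one gold chain per $x$, conditioning on $X$ makes the prefix deterministic, the rate term vanishes identically, and $\pi_u=0$ for all but one branch. So the CIB alone cannot give the floor. You would need to assume, for example, that the data put mass at least $\eta>0$ on each $u\in N_{\text{valid}}(v)$ and that encoder likelihood ratios between prefixes are bounded below by $\rho>0$. Then $p(u\mid h)\ge\rho\eta$, and your computation finishes the proof.

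The paper never seeks a floor. It derives only the cap $\sup_{h\in\mathcal H}\max_u p(u\mid h)\le 1-\delta$, from compactness and the nonzero frontier slope $1/\beta(k)$. It then asserts that the KL-maximizer under this cap is $(1-\delta,\frac{\delta}{B-1},\dots,\frac{\delta}{B-1})$. That is false for $B\ge 3$: with $\delta\le\frac12$, the vector $(1-\delta,\delta,0,\dots,0)$ satisfies the cap and has infinite divergence. So the obstacle you flagged is exactly where the paper's argument breaks too.

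What does close the statement as written is simpler than either argument. The paper already assumes $\mathcal H$ is compact and $h\mapsto p(u\mid h)$ is continuous. With a softmax head, $p(u\mid h)$ is also strictly positive, so $\eta:=\min_{h\in\mathcal H,\,u\in N_{\text{valid}}(v)}p(u\mid h)>0$. Your first display then gives $D_{\text{KL}}\le -\log B-\log\eta$. This uses nothing about the CIB and applies verbatim to CoT, which shows how weak the statement actually is.
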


\textit{Proof in Appendix~\ref{app:proof_thm_latent_cot_exploration_guarantee}.} This finite upper bound on $D_{\text{KL}}$ ensures that Latent CoT maintains a non-degenerate distribution over options, contrasting sharply with CoT's unbounded divergence (Theorem~\ref{thm:cot_exploration_deficiency}). This guarantees robust exploration, complementing CoT's strength in execution, which we analyze next.

\subsection{Latent CoT's Fragility in Symbolic Computation}
\label{exploitation_abli}

While Latent CoT’s continuous state space supports robust exploration (Section \ref{exploration_abi}), it is a liability for tasks requiring high-fidelity symbolic reasoning, such as GSM8K. Unlike CoT's discrete symbolic operations, Latent CoT's continuous state representations are inherently vulnerable to noise accumulation, which undermines precise step-by-step computation.

To formalize this, we analyze the effect of small internal errors, termed \textbf{sub-decisional perturbations}—noise that corrupts a model's internal state but is insufficient to alter the immediate output.

\begin{definition}[Sub-decisional Perturbation]
\label{def:sub_decisional_noise}
Let $l_k = l_k^* + \epsilon_k$ be the logit vector at step $k$. A perturbation $\epsilon_k$ is sub-decisional if
$$ \operatorname{argmax}(l_k^* + \epsilon_k) = \operatorname{argmax}(l_k^*). $$
\end{definition}

This concept highlights a key architectural divergence. CoT models are inherently robust to such perturbations due to their discrete generation process.

\begin{theorem}[Symbolic Integrity of CoT]
\label{thm:cot_final_state}
Let $S^* = (s_1^*, \dots, s_M^*)$ be the noise-free CoT trajectory and $\hat{S}$ the trajectory under sub-decisional perturbations. Then
$$ \mathbb{P}[\hat{s}_M \neq s_M^*] = 0. $$
\end{theorem}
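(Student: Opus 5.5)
We argue by induction on the step index $k$, showing that $\hat{S}^{(1\dots k)} = S^{*(1\dots k)}$ holds deterministically for every realization of the perturbations; the probability statement then follows at once. The setup is the CoT paradigm at inference time with greedy (argmax) decoding. At step $k$ the model computes a logit vector from the context $(x, \hat{S}^{(1\dots k-1)})$ and emits $\hat{s}_k = \operatorname{argmax}(l_k)$. In the noise-free run the logits are $l_k^*$, computed from $(x, S^{*(1\dots k-1)})$, and $s_k^* = \operatorname{argmax}(l_k^*)$. In the perturbed run the logits are $l_k = \tilde{l}_k + \epsilon_k$, where $\tilde{l}_k$ denotes the clean logits evaluated on the perturbed context. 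We also make explicit the implicit modeling assumption that the only state passed between steps is the discrete token sequence, so the perturbation $\epsilon_k$ is not carried forward into step $k+1$.

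\textbf{Base case.} The context at step $1$ is just $x$ in both runs, so $\tilde{l}_1 = l_1^*$. By Definition~\ref{def:sub_decisional_noise}, $\operatorname{argmax}(l_1^* + \epsilon_1) = \operatorname{argmax}(l_1^*)$, which gives $\hat{s}_1 = s_1^*$.

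\textbf{Inductive step.} Suppose $\hat{s}_j = s_j^*$ for all $j < k$. Then the two contexts coincide, so $\tilde{l}_k = l_k^*$ exactly. Because $\epsilon_k$ is sub-decisional, $\hat{s}_k = \operatorname{argmax}(l_k^*+\epsilon_k) = s_k^*$. The key mechanism is that discretization acts as a reset: any perturbation is discarded once the token is selected, so errors cannot accumulate across steps.

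Setting $k = M$ gives $\hat{s}_M = s_M^*$ on every realization. The event $\{\hat{s}_M \neq s_M^*\}$ is therefore empty and has probability zero.

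The main obstacle is interpretive rather than technical. Definition~\ref{def:sub_decisional_noise} is stated relative to $l_k^*$, the clean logits on the clean trajectory. The proof must fix that the comparison at step $k$ is made on the realized context, and that no hidden continuous state (such as a KV cache carrying $\epsilon$) propagates between steps. I would state these as explicit conventions of the CoT model before running the induction. I would also note that ties in the argmax must be broken by a fixed deterministic rule, so that equality of argmax sets implies equality of the selected tokens.
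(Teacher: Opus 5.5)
Your proposal is correct and follows essentially the same route as the paper: an induction on $k$ in which identical discrete histories yield identical clean logits $l_k^*$, after which the sub-decisional condition forces $\hat{s}_k = s_k^*$, so the divergence event is empty and has probability zero. Your explicit conventions (no continuous state such as a perturbed KV cache carried between steps, and a deterministic tie-breaking rule for the argmax) make precise what the paper assumes implicitly when it says $f_\theta$ is deterministic and conditioned strictly on the discrete history.
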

\textit{Proof in Appendix~\ref{app:proof_thm_cot_final_state}.}

Theorem~\ref{thm:cot_final_state} formalizes CoT's resilience to sub-decisional perturbations. The underlying mechanism is a \textbf{discretization-reset} process: \emph{at each step, the argmax operation acts as an error-correcting filter}, projecting the continuous hidden state to a discrete token and thereby discarding any sub-decisional noise. This reset ensures that subsequent reasoning begins from a clean symbolic representation, preventing error propagation across the reasoning chain.

In stark contrast, Latent CoT lacks this discretization-reset mechanism. Its continuous state $h_k$ is passed directly to the next step without quantization, carrying forward any perturbation. For a model with transition function $f_\theta$ (Lipschitz constant $L_F$) and i.i.d. noise $\epsilon_h^{(k)} \sim \mathcal{N}(0, \sigma_h^2 I_d)$, this error propagation is quantifiable:

\begin{theorem}[Compounding Error in Latent Computation]
\label{thm:coconut_final_state}
Let $E_M = h_M - h_M^*$. Then for $M \ge 1$,
$$ \mathbb{E}[\|E_M\|^2] = \frac{1 - L_F^{2M}}{1 - L_F^2} \cdot d\sigma_h^2 > 0. $$
\end{theorem}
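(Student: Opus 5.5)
The plan is to make the noise model behind the statement explicit and then unroll a linear error recursion. I would take the perturbed latent dynamics to be $h_k = f_\theta(h_{k-1}) + \epsilon_h^{(k)}$ with $h_0 = h_0^*$, while the clean trajectory obeys $h_k^* = f_\theta(h_{k-1}^*)$. Subtracting gives
\[
E_k = f_\theta(h_{k-1}) - f_\theta(h_{k-1}^*) + \epsilon_h^{(k)}.
\]
An \emph{equality} in terms of $L_F$ cannot follow from a Lipschitz \emph{bound} alone. So I would state as a modeling assumption that, in the small-perturbation regime, $f_\theta$ acts on deviations through its linearization along the trajectory. I would further assume this linearization is isotropic with gain $L_F$: $f_\theta(h_{k-1}) - f_\theta(h_{k-1}^*) = J_k E_{k-1}$ with $J_k^\top J_k = L_F^2 I_d$, i.e.\ $J_k = L_F U_k$ with $U_k$ orthogonal. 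Under a plain Lipschitz assumption the same argument gives the displayed quantity as an upper bound, and I would remark on this.

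The recursion is then $E_k = J_k E_{k-1} + \epsilon_h^{(k)}$ with $E_0 = 0$. Expanding the squared norm gives
\[
\mathbb{E}\|E_k\|^2 = \mathbb{E}\|J_k E_{k-1}\|^2 + 2\,\mathbb{E}\langle J_k E_{k-1}, \epsilon_h^{(k)}\rangle + \mathbb{E}\|\epsilon_h^{(k)}\|^2 .
\]
The three terms are handled as follows:
\begin{itemize}
\item The cross term vanishes. $E_{k-1}$ and $J_k$ depend only on $\epsilon_h^{(1)},\dots,\epsilon_h^{(k-1)}$, which are independent of $\epsilon_h^{(k)}$, and $\epsilon_h^{(k)}$ has mean zero.
\item The isotropy assumption gives $\|J_k E_{k-1}\|^2 = L_F^2\|E_{k-1}\|^2$.
\item $\mathbb{E}\|\epsilon_h^{(k)}\|^2 = \operatorname{tr}(\sigma_h^2 I_d) = d\sigma_h^2$.
\end{itemize}
This yields the scalar recursion
\[
a_k = L_F^2 a_{k-1} + d\sigma_h^2, \qquad a_0 = 0, \qquad a_k := \mathbb{E}\|E_k\|^2 .
\]

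Unrolling by induction on $k$ gives $a_M = d\sigma_h^2\sum_{j=0}^{M-1} L_F^{2j}$. Summing the geometric series gives $\frac{1-L_F^{2M}}{1-L_F^2}\, d\sigma_h^2$ for $L_F \neq 1$. For $L_F = 1$ the expression is read as its limit $M d\sigma_h^2$. Positivity for $M \ge 1$ is immediate, since the sum contains the term $j=0$ equal to $1$ and $\sigma_h^2 > 0$. I would close with the qualitative reading: for $L_F \ge 1$ the error grows at least linearly in $M$ and never resets, in contrast to the argmax reset in Theorem~\ref{thm:cot_final_state}.

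The main obstacle is the first step, namely justifying an exact identity from a transition that is only assumed Lipschitz. I would handle it by stating the linear-isotropic assumption explicitly. I would also note that without it one obtains only
\[
\mathbb{E}\|E_M\|^2 \le \frac{1-L_F^{2M}}{1-L_F^2}\, d\sigma_h^2 .
\]
In that case the strict positivity survives via the lower bound $\mathbb{E}\|E_M\|^2 \ge \mathbb{E}\|\epsilon_h^{(M)}\|^2 = d\sigma_h^2$, which comes from the same orthogonality of the fresh noise.
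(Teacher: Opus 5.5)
Your proof follows the paper's route: the same error recursion $E_k = f_\theta(h_{k-1}) - f_\theta(h_{k-1}^*) + \epsilon_h^{(k)}$ with $E_0=0$, the cross term killed by independence and zero mean, and the geometric sum with the $L_F=1$ case read as $M d\sigma_h^2$. The one difference is in your favor. The paper derives only $\mathbb{E}\|E_k\|^2 \le L_F^2\,\mathbb{E}\|E_{k-1}\|^2 + d\sigma_h^2$ from the Lipschitz bound and then states the solution as an equality, which does not follow. Your isotropic-gain assumption (in effect, that the Lipschitz bound is attained along the perturbed trajectory) is exactly what makes the recursion an identity. Your fallback, the upper bound together with $\mathbb{E}\|E_M\|^2 \ge d\sigma_h^2$ for positivity, is precisely what the paper's argument actually establishes.
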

\textit{Proof in Appendix~\ref{app:proof_thm_coconut_final_state}.}  The term $\frac{1 - L_F^{2M}}{1 - L_F^2}$ demonstrates that the expected squared error grows with the number of reasoning steps $M$. If the model's transition function is expansive ($L_F > 1$), the error compounds exponentially; even for stable functions ($L_F \le 1$), error still accumulates. This mathematical result directly explains Latent CoT's performance degradation on precision-sensitive tasks like GSM8K. The accumulation of even small perturbations over a long reasoning chain corrupts the final state, undermining the integrity required for symbolic computation and complementing the dichotomy analyzed in Section ref{exploration\_abi}.

\subsection{Unifying the Trade-off via the Symbolic Index}
\label{sec:unifying_perspective}

The analyses in Section \ref{exploration_abi} and Section \ref{exploitation_abli} reveal a dichotomy: CoT excels at execution but lacks exploration, while Latent CoT does the opposite. Here, we unify these behaviors through a single, quantifiable principle: the degree of decisional certainty at each reasoning step. To formalize this, we introduce the \textbf{Symbolic Index} ($\mathcal{I}_{\text{S}}$), a measure of how committed the model is to its top choice.

\begin{definition}[Symbolic Index $\mathcal{I}_{\text{S}}$]
\label{def:s_index}
At a decision point with a vocabulary of valid tokens $\mathcal{V}$, let $p(u \mid h, x)$ be the model's output distribution. The Symbolic Index is the probability of the most likely token:
$$
\mathcal{I}_{\text{S}} = \max_{u \in \mathcal{V}} p(u \mid h, x) \in [1/|\mathcal{V}|, 1].
$$
\end{definition}
A high $\mathcal{I}_{\text{S}}$ ($\approx 1$) signifies high certainty, characteristic of CoT's discrete commitments. A low $\mathcal{I}_{\text{S}}$ signifies distributed consideration over multiple options, characteristic of Latent CoT.

First, we establish the direct link between this certainty and robustness against computational noise. The mechanism for this robustness is the separation between the top two choices, which we term the Logit Decision Margin.

\begin{definition}[Logit Decision Margin $\Delta_l$]
\label{def:logit_margin_revised}
Let $l_{i^*}$ and $l_{j^*}$ be the logits of the most likely and second-most likely tokens, respectively. The margin is their difference: $\Delta_l = l_{i^*} - l_{j^*}$.
\end{definition}

The following theorem proves that high certainty guarantees a large decision margin, making the model resilient to perturbations.

\begin{theorem}[Symbolic Stability Theorem]
\label{thm:symbolic_stability_revised}
The Logit Decision Margin $\Delta_l$ is lower-bounded by the Symbolic Index $\mathcal{I}_{\text{S}}$ as follows:
$$
\Delta_l \geq \log \left( \frac{\mathcal{I}_{\text{S}}}{1 - \mathcal{I}_{\text{S}}} \right).
$$
\end{theorem}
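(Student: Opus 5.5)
We assume the output distribution is the softmax of the logit vector, $p(u \mid h,x) = e^{l_u}/\sum_{v\in\mathcal{V}} e^{l_v}$. The proof is then a direct computation with the softmax ratio. The idea is to compare the top probability $\mathcal{I}_{\text{S}} = p_{i^*}$ with the second-largest probability $p_{j^*}$. The margin is exactly the log-ratio of these two: $\Delta_l = l_{i^*} - l_{j^*} = \log(p_{i^*}/p_{j^*})$, because the normalizing constant cancels. So it suffices to upper bound $p_{j^*}$ by $1 - \mathcal{I}_{\text{S}}$.

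First, we will write $p_{i^*} = \mathcal{I}_{\text{S}}$ and observe that all probabilities are strictly positive under softmax, so the log-ratio is well defined. Second, since the probabilities sum to one, $p_{j^*} \le \sum_{u \neq i^*} p_u = 1 - \mathcal{I}_{\text{S}}$. Third, since $\log$ is monotone, we get $\Delta_l = \log(\mathcal{I}_{\text{S}}/p_{j^*}) \ge \log\bigl(\mathcal{I}_{\text{S}}/(1-\mathcal{I}_{\text{S}})\bigr)$. Equality holds exactly when $|\mathcal{V}| = 2$, or more generally when all non-top mass sits on the runner-up. This shows the bound is tight.

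There are boundary cases to check. If $\mathcal{I}_{\text{S}} = 1$, the right-hand side is $+\infty$; this cannot occur for finite logits, so the statement is vacuous there, or should be read as a limit. If $\mathcal{I}_{\text{S}} \le 1/2$, the right-hand side is nonpositive, and the bound holds trivially because $\Delta_l \ge 0$. The only real obstacle is making explicit the implicit assumption that $p$ is a softmax of the logits $l$. Without this link, Definitions~\ref{def:s_index} and \ref{def:logit_margin_revised} are unrelated, so we will state it at the outset. We will also remark that ties in the argmax do not affect the argument.
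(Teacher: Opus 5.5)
Your proposal is correct and follows the paper's proof essentially step for step: under the softmax the margin is $\Delta_l = \log(p_{i^*}/p_{j^*})$, the runner-up probability is bounded by the residual mass $1-\mathcal{I}_{\text{S}}$, and monotonicity of $\log$ gives the bound. Stating the softmax assumption explicitly and checking the boundary cases are welcome additions that the paper leaves implicit. One small nit: with finite logits every token has positive mass, so equality is attained only when $|\mathcal{V}|=2$ and is merely approached in the limit otherwise.
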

\textit{Proof in Appendix~\ref{app:symbolic_and_margin}.}
This theorem establishes a direct relationship: higher decisional certainty guarantees a larger protective margin. For instance, a high-certainty CoT with $\mathcal{I}_{\text{S}} = 0.99$ has a margin $\Delta_l \geq \log(99) \approx 4.6$, making its decision robust to significant noise. In contrast, a low-certainty Latent CoT with $\mathcal{I}_{\text{S}} = 0.6$ has a margin of only $\Delta_l \geq \log(1.5) \approx 0.4$, rendering it vulnerable to small perturbations. This explains CoT's noise immunity and symbolic integrity (Theorem~\ref{thm:cot_final_state}).

However, this stability comes at a direct and quantifiable cost to exploration. High certainty forces the model's distribution away from the uniform ideal required for unbiased exploration. The next theorem formalizes this trade-off.

\begin{theorem}[Exploration-Execution Trade-off Theorem]
\label{thm:explore_exploit_tradeoff_revised}
For a decision with $B$ valid options, the KL divergence from the ideal uniform exploration prior $q_{\text{PR}}$ is lower-bounded by:
$$
D_{\text{KL}}(q_{\text{PR}} \| p) \geq \log B + \mathcal{I}_{\text{S}}\log(\mathcal{I}_{\text{S}}) + (1-\mathcal{I}_{\text{S}})\log\left(\frac{1-\mathcal{I}_{\text{S}}}{B-1}\right).
$$
This bound is minimized when $\mathcal{I}_{\text{S}} = 1/B$ and grows as $\mathcal{I}_{\text{S}} \to 1$.
\end{theorem}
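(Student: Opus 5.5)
The plan is to reduce the problem to a scalar inequality in $\mathcal{I}_{\text{S}}$. First I use convexity to find the worst-case $p$ for a fixed $\mathcal{I}_{\text{S}}$. Then I compare the resulting closed form with the stated bound.

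\textbf{Setup.} I restrict attention to the $B$ valid options and assume, as in Section~\ref{exploration_abi}, that $p$ is supported on them. If $p$ leaks mass outside $N_{\text{valid}}(v)$, the $p_i$ on valid options only shrink, which only increases $D_{\text{KL}}(q_{\text{PR}}\|p)$. Write $p_1=\mathcal{I}_{\text{S}}=:s$ for the top option and $p_2,\dots,p_B$ for the rest, with $\sum_{i\ge 2}p_i=1-s$. Then
\[
D_{\text{KL}}(q_{\text{PR}}\|p)=-\log B-\frac{1}{B}\sum_{i=1}^B\log p_i .
\]

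\textbf{Step 1 (Jensen reduction).} Since $\log$ is concave,
\[
\sum_{i\ge 2}\log p_i\le (B-1)\log\frac{1-s}{B-1}.
\]
So, among all $p$ with Symbolic Index $s$, the divergence is minimized by the two-level distribution $\tilde p=\bigl(s,\tfrac{1-s}{B-1},\dots,\tfrac{1-s}{B-1}\bigr)$. This gives the exact lower bound
\[
F(s):=-\log B-\frac{1}{B}\log s-\frac{B-1}{B}\log\frac{1-s}{B-1}.
\]
The remaining task is to show $F(s)\ge G(s)$ for all $s\in[1/B,1]$, where $G(s)$ is the right-hand side of Theorem~\ref{thm:explore_exploit_tradeoff_revised}. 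Note that $G(s)=D_{\text{KL}}(\tilde p\,\|\,q_{\text{PR}})$ and $F(s)=D_{\text{KL}}(q_{\text{PR}}\,\|\,\tilde p)$. So Step 2 is a comparison of the two KL directions along the one-parameter family $\tilde p$.

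\textbf{Step 2 (scalar comparison, the main obstacle).} Set $h=F-G$.
\begin{itemize}
\item At $s=1/B$ both divergences vanish, so $h(1/B)=0$. Both also have zero derivative there, so $h'(1/B)=0$.
\item Differentiating gives $F'(s)=-\frac{1}{Bs}+\frac{B-1}{B(1-s)}$ and $G'(s)=\log s-\log\frac{1-s}{B-1}$.
\item I would show $h'(s)\ge 0$ on $[1/B,1)$. The first attempt is to substitute $t=\frac{s(B-1)}{1-s}\ge 1$, the ratio of the top mass to each other mass. Then $G'=\log t$, and $F'$ becomes a rational function of $t$ and $B$.
\item The desired inequality should then reduce to an elementary bound of the type $\log t\le t-1$ or $\log t\le \frac{t-1}{\sqrt t}$, after clearing the factor $1-s$.
\end{itemize}
If this monotonicity argument gets messy, a fallback is to note that $F\to\infty$ like $-\frac{B-1}{B}\log(1-s)$ while $G\le\log B$ stays bounded. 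I would then verify $h\ge 0$ on the interior via convexity of $h$, checking $h''\ge 0$, together with $h(1/B)=h'(1/B)=0$. I expect this scalar inequality to be the delicate part, especially for small $B$; I would check $B=2$ explicitly first.

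\textbf{Step 3 (shape of the bound).} For the final claims, $G$ is a KL divergence to the uniform distribution, so $G\ge 0$, with equality iff $\tilde p$ is uniform, i.e.\ $s=1/B$. Also $G'(s)=\log t>0$ for $s>1/B$, so the bound increases monotonically as $\mathcal{I}_{\text{S}}\to 1$, reaching $\log B$ in the limit.
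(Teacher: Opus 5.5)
Your Step 1 is sound, and it is the paper's entire argument. The appendix proof uses the same Jensen step to reach $D_{\text{KL}}(q_{\text{PR}}\|p)\ge F(\mathcal{I}_{\text{S}})$ and then declares the proof complete, without ever reconciling $F$ with the displayed right-hand side $G$. You were right that $G(s)=D_{\text{KL}}(\tilde p\,\|\,q_{\text{PR}})$ is a different quantity. However, Step 2 cannot be carried out, because $F\ge G$ is false for every $B\ge 3$. Besides $h(1/B)=h'(1/B)=0$, you also have $h''(1/B)=0$: both divergences have the same curvature at the uniform point, $F''(1/B)=G''(1/B)=B^2/(B-1)$. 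The third derivative is negative:
\[
h'''(1/B)=-B^2\Bigl(1-\frac{1}{(B-1)^2}\Bigr)<0 \qquad (B\ge 3).
\]
Hence $h$, $h'$ and $h''$ are all strictly negative just to the right of $1/B$, which defeats both your monotonicity route and your convexity fallback. Concretely:
\begin{itemize}
\item For $B=3$ and $p=(0.4,0.3,0.3)$, $D_{\text{KL}}(q_{\text{PR}}\|p)\approx 0.00947$, while the claimed lower bound is $\approx 0.00971$.
\item For $B=10$ and $\mathcal{I}_{\text{S}}=0.5$, the gap is large: $F\approx 0.368$ versus $G\approx 0.511$.
\end{itemize}
The two curves meet exactly at $s=1-1/B$, where both equal $\frac{B-2}{B}\log(B-1)$, and numerically the violation persists up to that crossing. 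Only $B=2$ is safe: there $1-1/B=1/B$, the cubic term vanishes, and $F-G\ge 0$ on the whole range.

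So the theorem as displayed is false for $B\ge 3$. No argument can close your Step 2, and the paper's proof does not establish it either. The displayed right-hand side is the tight bound for the other KL direction. There are two ways to correct the statement, and both follow from the single concavity step you already have.
\begin{itemize}
\item Keep the divergence and change the bound. Step 1 proves the tight bound $D_{\text{KL}}(q_{\text{PR}}\|p)\ge F(\mathcal{I}_{\text{S}})=-\log B-\frac{1}{B}\log\mathcal{I}_{\text{S}}-\frac{B-1}{B}\log\frac{1-\mathcal{I}_{\text{S}}}{B-1}$. The qualitative claims hold for it, since $F'(s)=\frac{Bs-1}{Bs(1-s)}$: $F$ vanishes at $s=1/B$, increases, and diverges as $s\to 1$.
\item Keep the bound and reverse the divergence. $G$ is a correct lower bound for $D_{\text{KL}}(p\|q_{\text{PR}})=\log B-H(p)$. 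With the top mass fixed at $s\ge 1/B$, the entropy is maximized by spreading $1-s$ uniformly over the other $B-1$ options. Your Step 3 then applies verbatim.
\end{itemize}
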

\textit{Proof in Appendix~\ref{app:tradeoff}.}
This inequality proves that gaining execution stability by increasing $\mathcal{I}_{\text{S}}$ \textit{necessarily} degrades exploration capability by increasing the divergence from a uniform policy. It quantifies the inherent tension between committing to a single path and keeping options open, explaining CoT's exploration deficiency (Theorem~\ref{thm:cot_exploration_deficiency}).

Together, these results establish $\mathcal{I}_{\text{S}}$ as the core regulator, unifying both paradigms:
\begin{enumerate}
    \item \textbf{CoT:} High-$\mathcal{I}_{\text{S}}$ regime ensures large decision margins for robust execution (Theorems \ref{thm:symbolic_stability_revised}, \ref{thm:cot_final_state}) but degrades exploration (Theorem~\ref{thm:explore_exploit_tradeoff_revised}).

    \item \textbf{Latent CoT:} Low-$\mathcal{I}_{\text{S}}$ regime enables broad exploration (Theorem~\ref{thm:latent_cot_exploration_guarantee}) but implies small decision margins, causing vulnerability to sub-decisional noise that compounds over steps (Theorem~\ref{thm:coconut_final_state}), leading to failure on precision-sensitive tasks.
\end{enumerate}

This framework shifts focus from binary architectural choices to \textbf{managing decisional certainty}, suggesting adaptive systems that dynamically regulate $\mathcal{I}_{\text{S}}$ based on task demands.

\section{Why Curriculum Learning Works}
\label{sec:training_dynamics}

Having established that $\mathcal{I}_{\text{S}}$ governs the exploration-execution trade-off in \emph{trained} models (§4.4), we now confront a fundamental question: how can we train a model to operate in the low-$\mathcal{I}_{\text{S}}$ regime required for robust exploration? This reveals a paradox inherent to Latent CoT training. Unlike Explicit CoT, which benefits from token-level supervision at each reasoning step, Latent CoT must learn to generate meaningful latent representations $h_k$ without explicit ground truth.

\subsection{The Training Paradox of Latent CoT}
\label{why_curriculum_works}
    
An untrained model has no incentive to perform multi-step reasoning encoded in $h_k$; instead, it tends to learn "shortcut" representations that merely map surface-level input patterns to outputs. Training requires reliable gradient signals, which necessitate high-certainty (high-$\mathcal{I}_{\text{S}}$) predictions to minimize loss variance. However, the ultimate goal is to learn low-$\mathcal{I}_{\text{S}}$ representations that enable exploration. Direct training thus traps the model in a high-$\mathcal{I}_{\text{S}}$ regime, preventing the emergence of true latent reasoning.

To rigorously analyze this difficulty, we model the problem within an Imitation Learning (IL) framework. Let $P_{\theta^*}(h|x)$ denote the unobserved expert policy that generates optimal, reasoning-encoded latent states. We define a binary valuation function $V(h) \in \{0,1\}$, where $V(h)=1$ implies the latent state leads to the correct answer. The primary objective is to maximize the \textbf{Task Success Rate} $\mathcal{R}(\theta)$, defined as the expected value of this function under the induced policy:
\begin{equation*}
\mathcal{R}(\theta) = \mathbb{E}_{x \sim \mathcal{D}, h \sim P_\theta(\cdot|x)} [V(h)].
\end{equation*}
Without a curriculum, the model must learn from its own generated latent states. These self-generated states come from a biased distribution $P_{\text{biased}}$ of superficial representations, which is fundamentally different from the true expert distribution $P_{\theta^*}$. Since the valuation function $V(h)$ is non-uniform (favoring reasoning states), learning from a mismatched distribution guarantees suboptimal performance.

\begin{theorem}[Provable Failure of Training without Curriculum]
\label{thm:fail_without_curriculum}
Let $D_{nc}$ be a dataset of latent states drawn from a biased distribution $P_{\text{biased}}$. If this distribution is intrinsically suboptimal compared to the expert (i.e., $\mathcal{R}(P_{\text{biased}}) \le \mathcal{R}(\theta^*) - \Delta$ for some gap $\Delta > 0$), then the model $\hat{\theta}_{\text{MLE}}$ trained on $D_{nc}$ is strictly bounded away from optimality:
\[
\mathcal{R}(\hat{\theta}_{\text{MLE}}) \le \mathcal{R}(\theta^*) - C(\Delta),
\]
where $C(\Delta) > 0$ is a constant determined by the bias. This implies that the model's success rate is permanently capped below the expert's performance, regardless of dataset size.
\end{theorem}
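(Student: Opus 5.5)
The plan is to exploit the fact that maximum likelihood on $D_{nc}$ is, in the population limit, a projection of $P_{\text{biased}}$ onto the model class. The estimator therefore inherits the success rate of $P_{\text{biased}}$ up to an estimation term, and it can never inherit that of $P_{\theta^*}$. Concretely, I will write the MLE objective as $\frac{1}{n}\sum_i \log P_\theta(h_i\mid x_i)$ with $h_i\sim P_{\text{biased}}(\cdot\mid x_i)$. Its population version equals, up to a $\theta$-independent constant, $-\mathbb{E}_x D_{\mathrm{KL}}(P_{\text{biased}}(\cdot\mid x)\,\|\,P_\theta(\cdot\mid x))$. So $\hat\theta_{\text{MLE}}$ converges, as $n\to\infty$, to the reverse-information projection $\bar\theta=\arg\min_\theta \mathbb{E}_x D_{\mathrm{KL}}(P_{\text{biased}}\|P_\theta)$. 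If the model class contains $P_{\text{biased}}$ (the natural case, since the biased states are self-generated), then $P_{\bar\theta}=P_{\text{biased}}$. Otherwise I will carry the approximation error $\varepsilon_{\text{approx}}=\mathbb{E}_x D_{\mathrm{KL}}(P_{\text{biased}}\|P_{\bar\theta})$ along.

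Next I convert distributional closeness into closeness of success rates. Since $V\in\{0,1\}$, we have $|\mathcal{R}(\theta)-\mathcal{R}(\theta')|\le \mathbb{E}_x\,\mathrm{TV}(P_\theta(\cdot\mid x),P_{\theta'}(\cdot\mid x))$. Applying Pinsker then gives
\[
\mathcal{R}(\hat\theta_{\text{MLE}})\le \mathcal{R}(P_{\text{biased}})+\sqrt{\tfrac12\,\mathbb{E}_x D_{\mathrm{KL}}(P_{\text{biased}}\|P_{\hat\theta_{\text{MLE}}})}.
\]
Using the hypothesis $\mathcal{R}(P_{\text{biased}})\le\mathcal{R}(\theta^*)-\Delta$, this yields
\[
\mathcal{R}(\hat\theta_{\text{MLE}})\le \mathcal{R}(\theta^*)-\Delta+\sqrt{\tfrac12\,\varepsilon_n},
\]
where $\varepsilon_n$ is the KL excess of the fitted model. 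I decompose $\varepsilon_n$ into the approximation error plus a statistical term, which vanishes by standard MLE consistency for a well-specified, identifiable class.

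The conclusion then follows with $C(\Delta)=\Delta-\sqrt{\varepsilon/2}$. Here $\varepsilon$ is any uniform upper bound on $\varepsilon_n$, so $C(\Delta)>0$ whenever $\varepsilon<2\Delta^2$. Because the statistical error only decreases as the sample grows, more data only brings $\hat\theta_{\text{MLE}}$ closer to $P_{\text{biased}}$. The cap $\mathcal{R}(\theta^*)-C(\Delta)$ is therefore independent of dataset size, which justifies the "regardless of dataset size" clause.

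The main obstacle is making $C(\Delta)$ genuinely positive and independent of $n$. This requires that MLE on $D_{nc}$ does not drift toward $P_{\theta^*}$, i.e., that the approximation error stays below $2\Delta^2$. I would handle this by assuming realizability of $P_{\text{biased}}$, in which case the limiting gap is exactly $\Delta$. For finite $n$, I would then either state the bound with high probability with an $O(\sqrt{\log(1/\delta)/n})$ term, or note that for $n$ beyond a threshold the gap exceeds $\Delta/2$, taking $C(\Delta)=\Delta/2$.
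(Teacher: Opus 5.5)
Your proposal is correct once the hypotheses you add are granted, and it takes a genuinely different route from the paper. The paper does not argue in general. It exhibits one three-state log-linear instance ($\theta^*=[10,0,0]^\top$, with $P_{\text{biased}}$ a point mass on $h_{\text{shortcut}}$), notes that the MLE parameters diverge so that all mass moves to $h_{\text{shortcut}}$, and reads off $\mathcal{R}(\hat\theta_{\text{MLE}})\to 0$ against $\mathcal{R}(\theta^*)\approx 1$. You instead isolate the mechanism: MLE on $D_{nc}$ tracks the KL projection of $P_{\text{biased}}$ onto the model class, and $\mathcal{R}$ is $1$-Lipschitz in expected total variation. You then close with Pinsker, which is exactly the machinery of the paper's proof of Theorem~\ref{thm:success_with_curriculum}, aimed at $P_{\text{biased}}$ instead of $P_{\theta^*}$. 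This buys you an explicit $C(\Delta)$, and it turns the two curriculum theorems into one fact: the fitted model inherits the success rate of whatever distribution generated its data.

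The price is your extra hypotheses, and they cannot be dropped. Read universally, the statement is false without realizability: if the model class is $\{P_{\theta^*}\}$, the MLE is $\theta^*$ for any data. It is also false for small $n$ without a probabilistic qualifier. In a saturated model with $P_{\text{biased}}(h_{\text{expert}})>0$, a one-point sample equal to $h_{\text{expert}}$ makes the MLE put all its mass there. So the ``uniform upper bound on $\varepsilon_n$'' in your third paragraph does not exist deterministically; the fitted KL can even be infinite for small $n$. The version to state is the high-probability, $n\ge n_0$ one from your last paragraph.

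The paper's example avoids both issues only because its $P_{\text{biased}}$ is degenerate: the data are deterministic, so failure holds for every $n$. In exchange, it shows only that failure \emph{can} occur. To cover that example too, phrase realizability as $\inf_\theta \mathbb{E}_x D_{\mathrm{KL}}(P_{\text{biased}}\|P_\theta)=0$. There the MLE does not exist and $P_{\text{biased}}$ lies only in the closure of the log-linear family, so $\hat\theta_{\text{MLE}}$ must be read as an approximate maximizer.
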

\textit{Proof in Appendix~\ref{app:proof_thm_fail_without_curriculum}.}
This theorem demonstrates that training on "shortcut" latent states locks the model into a suboptimal policy. Unlike the variance error which vanishes with data (as seen in Theorem~\ref{thm:success_with_curriculum}), this bias error is irreducible.

\subsection{Curriculum Learning as Controlled $\mathcal{I}_{\text{S}}$ Transition}

From the perspective of the Symbolic Index, this can be understood as a controlled transition. In early stages (pure CoT), high $\mathcal{I}_{\text{S}}$ ensures accurate token-level gradients through error-correcting discretization. In hybrid stages, the prefix operates in latent space (low $\mathcal{I}_{\text{S}}$), but the suffix retains explicit token generation. This anchors the latent representations to valid reasoning traces. Finally, the model achieves the low $\mathcal{I}_{\text{S}}$ required for exploration, stabilized by the progressive internalization.

In our IL framework, this procedure effectively provides supervised samples of latent thoughts grounded in correct reasoning, equivalent to drawing samples directly from $P_{\theta^*}(h|x)$. Crucially, convergence in distribution guarantees convergence in performance. Since the valuation function $V(h)$ is bounded in $[0,1]$, the difference in success rates $|\mathcal{R}(\hat{\theta}) - \mathcal{R}(\theta^*)|$ is upper-bounded by the Total Variation distance between the policies. Consequently, minimizing the distributional divergence via curriculum learning directly optimizes the task success rate.

\begin{theorem}[Provable Success of Training with Curriculum]
\label{thm:success_with_curriculum}
Under standard statistical learning conditions, MLE on a dataset $D_c$ of size $n$ generated via the curriculum yields a model $\hat{\theta}$ whose success rate approaches that of the expert:
\begin{equation*}
\begin{split}
\mathcal{R}(\hat{\theta}) &\ge \mathcal{R}(\theta^*) - \mathcal{O}\left( \sqrt{ \frac{d \log n + \log(1/\delta)}{n} } \right),
\end{split}
\end{equation*}
with high probability $1-\delta$. The performance gap vanishes as the dataset size $n \to \infty$.
\end{theorem}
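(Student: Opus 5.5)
The plan is to split the success-rate gap into a distributional term and a statistical term, using the observation stated just before the theorem. Since $V(h)\in\{0,1\}$, for any $\theta$ we have
\[
\mathcal{R}(\theta^*) - \mathcal{R}(\theta) = \mathbb{E}_{x\sim\mathcal{D}}\Big[\mathbb{E}_{h\sim P_{\theta^*}(\cdot|x)}V(h) - \mathbb{E}_{h\sim P_{\theta}(\cdot|x)}V(h)\Big] \le \mathbb{E}_{x\sim\mathcal{D}}\, \mathrm{TV}\big(P_{\theta^*}(\cdot|x), P_{\theta}(\cdot|x)\big).
\]
This follows because $|\mathbb{E}_P f - \mathbb{E}_Q f| \le \mathrm{TV}(P,Q)$ for any $f$ with values in $[0,1]$. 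The task therefore reduces to controlling the expected total variation between the MLE $\hat\theta$ and the expert, when $\hat\theta$ is trained on $D_c$.

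Next I would use the modeling claim made in the paragraph preceding the theorem: the curriculum supplies latent samples grounded in correct reasoning. Formally, I take $D_c=\{(x_i,h_i)\}_{i=1}^n$ to be i.i.d. with $x_i\sim\mathcal{D}$ and $h_i\sim P_{\theta^*}(\cdot|x_i)$. This is exactly the point of contrast with Theorem~\ref{thm:fail_without_curriculum}, where the samples come from $P_{\text{biased}}$ and so leave an irreducible bias. Here the model is well specified, so the only error left is estimation error.

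I would then make the ``standard statistical learning conditions'' concrete. The parameter space $\Theta\subset\mathbb{R}^d$ is bounded, $\theta\mapsto\log P_\theta(h|x)$ is Lipschitz, and the log-likelihood ratios are bounded (or the densities are bounded below). Under these conditions, the parametric class has $\epsilon$-bracketing or covering entropy of order $d\log(1/\epsilon)$. A standard MLE concentration result (van de Geer; Zhang 2006; Agarwal et al.'s FLAMBE-style bound) then gives, with probability $1-\delta$,
\[
\mathbb{E}_x\, H^2\big(P_{\hat\theta}(\cdot|x),P_{\theta^*}(\cdot|x)\big) \lesssim \frac{d\log n + \log(1/\delta)}{n},
\]
where $H$ is the Hellinger distance and one takes $\epsilon=1/n$ in the covering. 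Using $\mathrm{TV}\le\sqrt{2}\,H$ and Jensen's inequality, $\mathbb{E}_x \mathrm{TV} \le \sqrt{2\,\mathbb{E}_x H^2}$, which yields the $\mathcal{O}\big(\sqrt{(d\log n+\log(1/\delta))/n}\big)$ rate. Combining this with the first display proves the theorem, and the bound tends to $0$ as $n\to\infty$.

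The main obstacle is the MLE generalization step, specifically justifying the covering argument for a conditional density class and handling unbounded log-likelihoods. My first approach is to assume densities bounded below by some $\rho>0$, so that the log-loss is bounded and Lipschitz. With that, a union bound over an $\epsilon$-net together with a Bernstein-type inequality for the log-likelihood ratio gives the Hellinger bound directly, at the cost of $\log(1/\rho)$ factors absorbed into the $\mathcal{O}$. A secondary point is that the equivalence ``curriculum samples $=$ expert samples'' is an assumption rather than something proved. I would state it explicitly as part of the hypotheses.
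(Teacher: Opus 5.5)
Your proposal is correct. Its outer frame matches the paper's proof. The paper also simply posits that $D_c$ consists of i.i.d.\ draws from $P_{\theta^*}$ (its ``Justification for the Imitation Learning Framework''). Its last step is your first display: the success-rate gap is bounded by $\mathbb{E}_x\,\mathrm{TV}$ because $V\in\{0,1\}$.

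The statistical core is genuinely different. The paper works at the level of parameters. It restricts to a log-linear family and assumes bounded features, a strongly convex population likelihood (Fisher information $\succeq\lambda_{\min}I$) and a sub-Gaussian score. It combines the first-order condition at $\hat\theta$ with the self-normalized martingale bound (Lemma~\ref{lem:self_normalized_bound}) to obtain the ellipsoid $\|\hat\theta-\theta^*\|_{\hat H_n}^2\le C\,(d\log n+\log(1/\delta))/n$ (Lemma~\ref{lemma:confidence_set}). It converts this into a KL bound via the local quadratic approximation $\mathbb{E}_x D_{\mathrm{KL}}(P_{\theta^*}\|P_{\hat\theta})\approx\frac12\|\hat\theta-\theta^*\|_{I(\theta^*)}^2$, using Hessian concentration to pass between $\hat H_n$, $\bar H_n$ and $I(\theta^*)$. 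Pinsker and Jensen then give TV.

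You skip parameter recovery and control a distance between distributions directly, through the covering/Hellinger analysis of MLE. Your route needs only realizability, a bounded parameter set and a uniformly Lipschitz parameterization. It requires no identifiability or strong convexity, no condition on the score, and no Taylor-remainder or Hessian-concentration control. The paper only sketches those steps with ``$\approx$'' and ``can be shown'', so your argument is non-asymptotic as it stands. In return, the paper's route yields a stronger intermediate conclusion, a confidence set for $\theta^*$ itself, which is more than the theorem needs.

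One simplification: your fallback assumption that densities are bounded below is unnecessary. Take the $\epsilon$-net in sup-norm on $\log P_\theta$. Then use the optimistic envelopes $e^{\epsilon}P_{\theta_j}$ in the Chernoff bound, together with the identity $\mathbb{E}_{h\sim P_{\theta^*}(\cdot|x)}\sqrt{P_{\theta_j}(h|x)/P_{\theta^*}(h|x)}=1-H^2\big(P_{\theta_j}(\cdot|x),P_{\theta^*}(\cdot|x)\big)$. The MLE bound then holds with no lower bound on the likelihoods, and choosing $\epsilon=1/n$ costs only an additive $O(1/n)$ in $\mathbb{E}_x H^2$.
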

\textit{Proof in Appendix~\ref{app:proof_thm_success_with_curriculum}.}
This theorem confirms that by using the explicit CoT as a scaffold to generate valid training data, the curriculum ensures that the student model can provably converge to the optimal expert policy. In conclusion, curriculum learning is theoretically necessary to resolve the distributional mismatch (Theorem~\ref{thm:fail_without_curriculum}) and ensure convergence (Theorem~\ref{thm:success_with_curriculum}).

\section{Experiments}

We conduct an empirical evaluation to validate the theoretical framework established in Section \ref{sec:unifying_perspective}. Our experiments verify three hypotheses: (1) Latent CoT and Explicit CoT operate in distinct regimes of decisional certainty (validating Theorems \ref{thm:cot_exploration_deficiency} \& \ref{thm:latent_cot_exploration_guarantee}); (2) this difference in certainty determines robustness to computational noise (validating Theorems \ref{thm:symbolic_stability_revised} \& \ref{thm:explore_exploit_tradeoff_revised}); and (3) Latent CoT exhibits error accumulation consistent with continuous state dynamics (validating Theorem \ref{thm:coconut_final_state}).

\paragraph{Experimental Setup}
We use the GPT-2 (124M) architecture. We compare a standard Explicit CoT model against a Latent CoT model trained via the Coconut curriculum ($M=6$ latent steps). Evaluations are performed on \textbf{GSM8K} \cite{cobbe2021trainingverifierssolvemath} for precise symbolic computation, and \textbf{ProsQA} \cite{hao2024traininglargelanguagemodels} for exploratory reasoning. Implementation details are in Appendix~\ref{app:exp_details}.

\subsection{Analysis of Decisional Certainty ($\mathcal{I}_{\text{S}}$)}
\label{sec:exp_symbolic_index}

To verify the Exploration-Execution Trade-off (Theorem~\ref{thm:explore_exploit_tradeoff_revised}), we measure the \textbf{Symbolic Index} ($\mathcal{I}_{\text{S}}$), defined as the maximum token probability at a given step.

\begin{figure}[h!]
    \centering
    \begin{minipage}{0.48\textwidth}
        \centering
        \includegraphics[width=\linewidth]{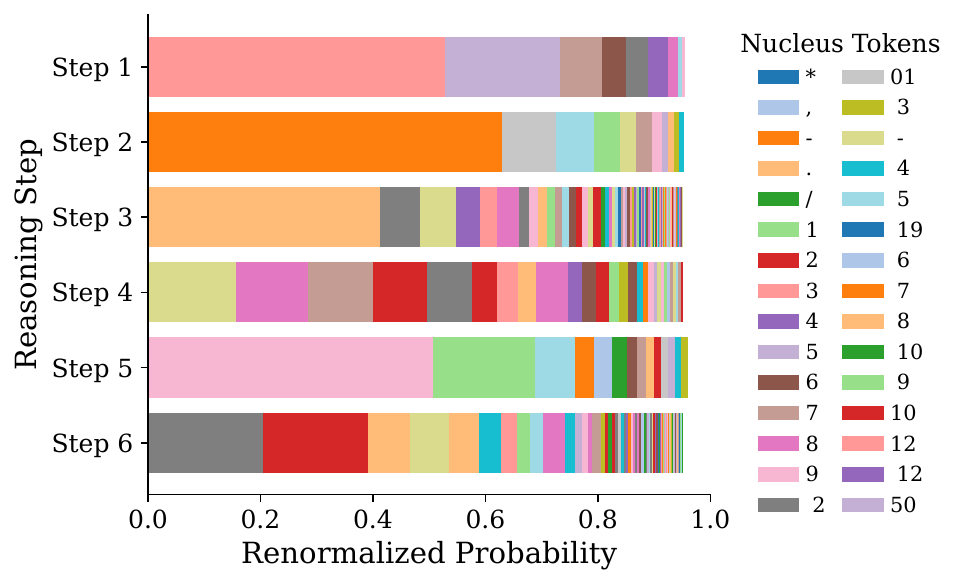}
        \caption{\textbf{Symbolic Index on GSM8K.} Latent CoT (shown) maintains a low Symbolic Index ($\mathcal{I}_{\text{S}} \in [0.2, 0.5]$), indicating a dispersed probability distribution. It lacks the probability concentration ($\mathcal{I}_{\text{S}} \approx 1.0$) observed in Explicit CoT.}
        \label{fig:gsm8k_logits}
    \end{minipage}\hfill
    \begin{minipage}{0.48\textwidth}
        \centering
        \includegraphics[width=\linewidth]{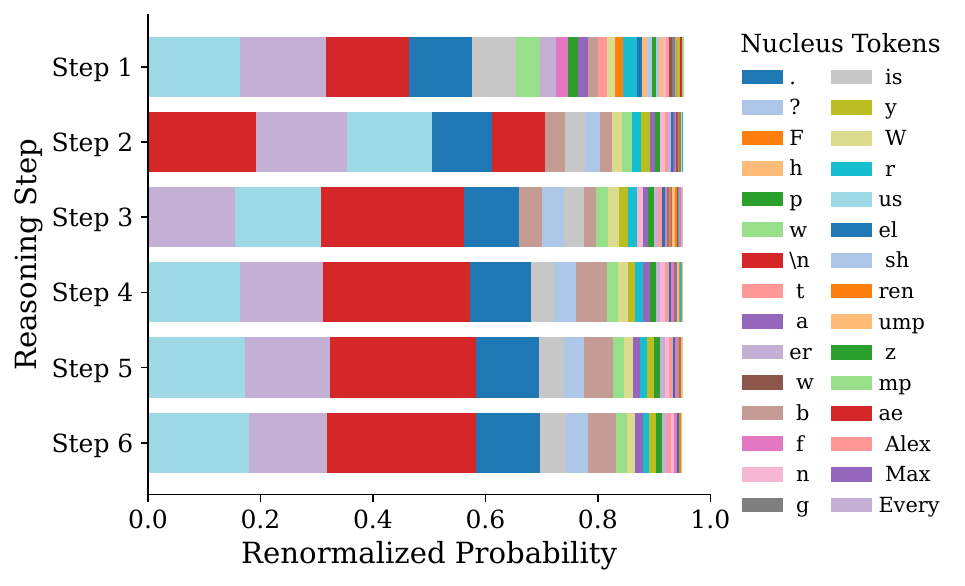}
        \caption{\textbf{Symbolic Index on ProsQA.} Latent CoT exhibits a stable, low $\mathcal{I}_{\text{S}}$ distribution across reasoning steps. This validates Theorem~\ref{thm:latent_cot_exploration_guarantee}, showing that the model distributes probability mass across multiple latent paths rather than converging to a single token.}
        \label{fig:prosqa_logits}
    \end{minipage}
\end{figure}

Figures~\ref{fig:gsm8k_logits} and \ref{fig:prosqa_logits} show the distribution of $\mathcal{I}_{\text{S}}$ values:
\begin{itemize}
    \item \textbf{Exploration via Low $\mathcal{I}_{\text{S}}$ (ProsQA):} On ProsQA (Figure~\ref{fig:prosqa_logits}), the Latent CoT model maintains $\mathcal{I}_{\text{S}} < 0.6$. This empirically confirms Theorem~\ref{thm:latent_cot_exploration_guarantee}. The persistently low $\mathcal{I}_{\text{S}}$ indicates that the model retains multiple potential reasoning trajectories in the latent space, preventing premature convergence to a suboptimal solution. This mechanism supports the high accuracy (97.0\%) observed in exploratory tasks.
    
    \item \textbf{Absence of Discretization (GSM8K):} On GSM8K (Figure~\ref{fig:gsm8k_logits}), the model operates without high-probability concentration ($\mathcal{I}_{\text{S}} \ll 1.0$). According to Theorem~\ref{thm:symbolic_stability_revised}, a low $\mathcal{I}_{\text{S}}$ corresponds to a minimal Logit Decision Margin $\Delta_l$. Unlike Explicit CoT, which utilizes the `argmax` operator to reset state variance at each token, Latent CoT propagates the variance continuously. This lack of intermediate error correction leads to the lower computational accuracy (34.1\%).
\end{itemize}

\subsection{Noise Robustness and Stability Analysis}
\label{sec:exp_noise}

To validate error propagation (Theorem~\ref{thm:coconut_final_state}) and stability bounds (Theorem~\ref{thm:symbolic_stability_revised}), we inject Gaussian noise $\mathcal{N}(0, \sigma^2 I_d)$ into the hidden states (Latent CoT) or pre-output states (CoT) and measure Exact Match accuracy degradation.

\begin{figure}[h!]
    \centering
    \begin{minipage}{0.48\textwidth}
        \centering
        \includegraphics[width=\linewidth]{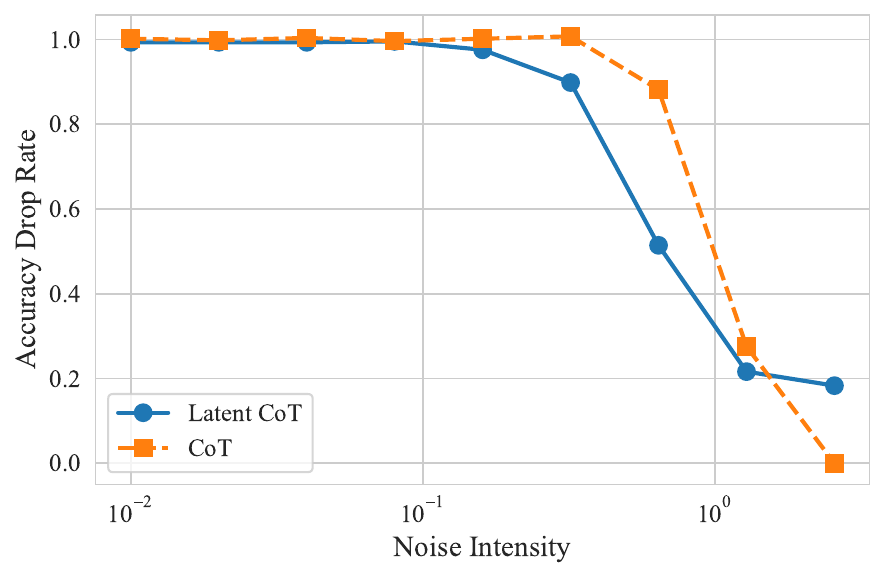} 
\caption{\textbf{Performance Degradation under Noise.} Standard CoT (orange) shows a threshold effect: performance is constant until $\sigma$ exceeds the decision margin. Latent CoT (blue) shows monotonic decay starting from $\sigma \approx 0$. This aligns with the derivation $A(\sigma) = \Phi(\frac{\Delta_l}{\sqrt{C}\sigma})$ in Appendix~\ref{app:accuracy_drop_derivation}.}
        \label{fig:noise_model_comp}
    \end{minipage}\hfill
    \begin{minipage}{0.48\textwidth}
        \centering
        \includegraphics[width=\linewidth]{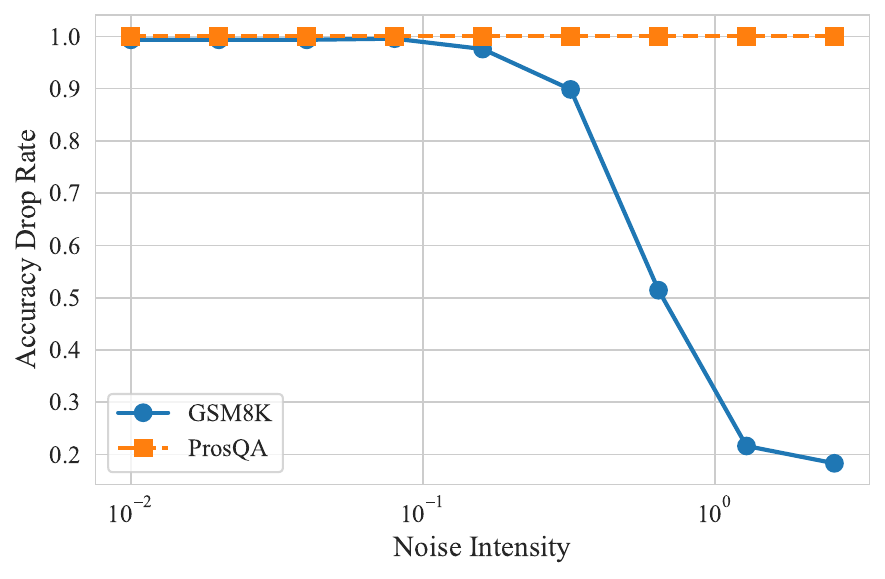} 
        \caption{\textbf{Task-Dependent Sensitivity.} Latent CoT retains higher performance on ProsQA (orange) compared to GSM8K (blue) under identical noise levels. This indicates that exploratory tasks have a larger tolerance for state deviation than precise computational tasks.}
        \label{fig:noise_task_comp}
    \end{minipage}
\end{figure}

The results in Figure~\ref{fig:noise_model_comp} demonstrate two distinct failure modes:
\begin{itemize}
    \item \textbf{Explicit CoT (Step-Function Response):} The Explicit CoT accuracy curve remains flat for low noise levels before exhibiting a sharp decline. This confirms the presence of a non-zero Logit Decision Margin $\Delta_l$. For noise vectors $\|\epsilon\| < \Delta_l$, the discrete projection (token selection) effectively filters the perturbation (Theorem~\ref{thm:cot_final_state}).
    \item \textbf{Latent CoT (Continuous Decay):} Latent CoT exhibits immediate, monotonic degradation. This confirms that $\Delta_l \approx 0$ in the continuous latent representation. Without a discrete margin, perturbations directly alter the state trajectory. The rate of decay on GSM8K supports Theorem~\ref{thm:coconut_final_state}, indicating that the transition function $f_\theta$ accumulates error over the reasoning steps $M=6$.
\end{itemize}

Figure~\ref{fig:noise_task_comp} shows that robustness depends on the task domain. While GSM8K performance degrades rapidly, ProsQA remains relatively stable under noise. This suggests that the solution space for exploratory tasks is continuous (small deviations in $h_M$ still map to valid semantic outputs), whereas computational tasks require $h_M$ to remain within a narrow region to produce the correct numerical token.

\subsection{Ablation: Curriculum Learning}
We analyze the training stability of Latent CoT without curriculum (Table~\ref{tab:main_results}: 52.4\% accuracy on ProntoQA). Monitoring the training process reveals that without the curriculum, the model converges to a high-$\mathcal{I}_{\text{S}}$ regime in the early epochs. This creates a distributional mismatch: the model learns a deterministic mapping based on input-output statistics rather than the expert latent policy $P_{\theta^*}$ (Theorem~\ref{thm:fail_without_curriculum}). The curriculum is therefore necessary to enforce the distribution alignment required for the emergence of the low-$\mathcal{I}_{\text{S}}$ reasoning capability.

\section{Conclusion}

In this work, we introduced the Symbolic Index as a unifying theoretical framework to explain the fundamental trade-off between exploration and execution in language model reasoning. Our analysis identifies decisional certainty as the governing mechanism: Explicit CoT's high certainty secures computational fidelity through error-correcting discretization but limits exploration via premature commitment, whereas Latent CoT's low certainty facilitates broad search at the cost of symbolic integrity due to compounding errors. We further proved that curriculum learning is a theoretical necessity for Latent CoT, ensuring convergence by bridging the distributional mismatch that otherwise traps models in suboptimal policies.

While the Symbolic Index currently serves as a diagnostic metric, it reveals a critical limitation in current paradigm designs: the rigidity of decisional certainty. Our framework suggests that the advancement of reasoning architectures requires moving beyond the binary choice between discrete and continuous modalities. Instead, future research should prioritize adaptive systems capable of dynamically regulating their decisional certainty—modulating between high-certainty commitment for precise execution and low-certainty diversification for planning. By establishing decisional certainty as a central design principle, this work lays the foundation for developing agents that can fluently transition between rigorous calculation and flexible exploration based on task demands.

\section*{Impact Statement}

This paper presents work whose goal is to advance the field of Machine
Learning. There are many potential societal consequences of our work, none
which we feel must be specifically highlighted here.

\bibliography{example_paper}
\bibliographystyle{icml2026}
%%%%%%%%%%%%%%%%%%%%%%%%%%%%%%%%%%%%%%%%%%%%%%%%%%%%%%%%%%%%%%%%%%%%%%%%%%%%%%%
%%%%%%%%%%%%%%%%%%%%%%%%%%%%%%%%%%%%%%%%%%%%%%%%%%%%%%%%%%%%%%%%%%%%%%%%%%%%%%%
% APPENDIX
%%%%%%%%%%%%%%%%%%%%%%%%%%%%%%%%%%%%%%%%%%%%%%%%%%%%%%%%%%%%%%%%%%%%%%%%%%%%%%%
%%%%%%%%%%%%%%%%%%%%%%%%%%%%%%%%%%%%%%%%%%%%%%%%%%%%%%%%%%%%%%%%%%%%%%%%%%%%%%%
\newpage
\appendix
\onecolumn
\section{Appendix}
\subsection{Experimental Details}
\label{app:exp_details}

\noindent\textbf{Models.} Both CoT and Latent CoT (Coconut) models are based on the smallest GPT-2 variant with 124M parameters. For Latent CoT, we use $M=6$ latent reasoning steps across all experiments. The latent thought vector $h_k$ has the same dimension as GPT-2’s hidden size ($d=768$).

\noindent\textbf{Training.} We use the Adam optimizer with learning rate $1\mathrm{e}{-4}$ and weight decay $0.01$. Training is performed on 4 GPUs with mixed-precision (bfloat16 enabled for ProsQA, disabled for GSM8K due to stability). For GSM8K, we train for 25 epochs with batch size 32 and gradient accumulation steps 1; for ProsQA, we train for 30 epochs with batch size 16 and gradient accumulation steps 2. The Coconut curriculum progresses stage-by-stage: for GSM8K, we use 3 epochs per stage up to latent stage 3 (i.e., compressing the first 3 CoT steps into latent thoughts); for ProsQA, we use 4 epochs per stage up to latent stage 6 (full internalization). The base model checkpoint is initialized from standard GPT-2.

\noindent\textbf{Datasets.} We evaluate on two reasoning benchmarks:
\begin{enumerate}
    \item \textbf{GSM8K}: We use the standard split with a held-out test set of 330 examples. Training and validation use the official training set (7,473 examples) and a 10\% validation subset, respectively.
    \item \textbf{ProsQA}: Following \citet{hao2024traininglargelanguagemodels}, we use a test set of 300 examples. The training and validation sets contain 2,000 and 200 examples, respectively.
\end{enumerate}
All reported accuracies are computed on these test sets.

\noindent\textbf{Symbolic Index Visualization Details.} The visualizations in Figures~\ref{fig:gsm8k_logits} and \ref{fig:prosqa_logits} are based on representative examples from the respective test sets.
\begin{itemize}
    \item \textbf{Figure~\ref{fig:gsm8k_logits} (GSM8K):} The plot was generated from the following math reasoning problem: "Janet's ducks lay 16 eggs per day. She eats three for breakfast every morning and bakes muffins for her friends every day with four. She sells the remainder at the farmers' market daily for \$2 per fresh duck egg. How much in dollars does she make every day at the farmers' market?"
    \item \textbf{Figure~\ref{fig:prosqa_logits} (ProsQA):} The plot was generated from the following logical reasoning problem: "Every shumpus is a yumpus. Every worpus is a yimpus. Every shumpus is a gwompus. Every tumpus is a boompus. Every worpus is a shumpus. Every storpus is a terpus. Max is a yimpus. Every shumpus is a rompus. Every wumpus is a jelpus. Every boompus is a terpus. Fae is a tumpus. Every tumpus is a worpus. Every rompus is a gorpus. Every timpus is a impus. Every jompus is a gerpus. Every boompus is a rompus. Fae is a boompus. Every boompus is a kerpus. Every zumpus is a bompus. Max is a rempus. Every rompus is a kerpus. Max is a impus. Every rempus is a impus. Every wumpus is a yumpus. Every grimpus is a terpus. Every tumpus is a jompus. Every yumpus is a felpus. Every jelpus is a felpus. Every shumpus is a felpus. Every rempus is a timpus. Every storpus is a jompus. Every rompus is a storpus. Every tumpus is a wumpus. Every wumpus is a jompus. Every boompus is a worpus. Fae is a storpus. Every worpus is a jelpus. Every grimpus is a felpus. Every worpus is a yumpus. Every rempus is a zumpus. Every kerpus is a grimpus. Is Fae a gwompus or bompus?"
\end{itemize}

\noindent\textbf{Noise Injection Protocol.} During inference, we inject Gaussian noise $\mathcal{N}(0, \sigma^2 I_d)$ as follows:
\begin{enumerate}
    \item For \textbf{CoT}: noise is added to the last hidden state before the LM head, i.e., $h \leftarrow h + \sigma \cdot \epsilon$, where $\epsilon \sim \mathcal{N}(0, I_d)$.
    \item For \textbf{Latent CoT}: noise is added at every reasoning step, i.e., $h_k \leftarrow f_\theta(h_{k-1}) + \sigma \cdot \epsilon_k$, with independent $\epsilon_k \sim \mathcal{N}(0, I_d)$.
\end{enumerate}
We report accuracy as the exact-match rate on the test set. The ''accuracy drop ratio'' in Figure~\ref{fig:noise_model_comp} is defined as $\text{Acc}(\sigma) / \text{Acc}(0)$.

\noindent\textbf{Reproducibility.} All experiments use seed 0. Due to the deterministic nature of our evaluation protocol and the use of nucleus sampling with fixed $p=0.95$ for Symbolic Index estimation, results are fully reproducible given the same model checkpoint.

\subsection{Proof of Theorem~\ref{thm:coconut_cib_duality}}
\label{app:proof_thm_coconut_cib_duality}

\begin{lemma}[Loss and Mutual Information]\label{lemma:loss_mi}
Assume the decoder model family $\{p_\theta(S^{(k+1...M)}\mid h_k,X)\}$ is well-specified, i.e., there exists a parameter $\theta^*$ such that $p_{\theta^*}(S^{(k+1...M)}\mid h_k,X)=p(S^{(k+1...M)} \mid h_k,X)$ holds almost everywhere. Further assume $H(S^{(k+1...M)} \mid X)<\infty$. Then,
\[
\min_\theta\mathcal{L}_{\text{Coconut}}^{(k)}(\theta)\iff\max_{p(h_k \mid X)}I(h_k;S^{(k+1...M)} \mid X).
\]
\end{lemma}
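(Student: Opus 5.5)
Write $Y := S^{(k+1\dots M)}$ for brevity. The plan is to express the expected negative log-likelihood in terms of information-theoretic quantities and then optimize in two stages: first over the decoder, then over the encoder.

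\textbf{Step 1: decompose the loss.} Fix an encoder $p(h_k\mid X)$, which induces a joint law of $(X,h_k,Y)$. For any decoder $\theta$,
\[
\mathcal{L}^{(k)}_{\text{Coconut}}(\theta)=\mathbb{E}\big[-\log p(Y\mid h_k,X)\big]+\mathbb{E}_{h_k,X}\Big[D_{\text{KL}}\big(p(\cdot\mid h_k,X)\,\|\,p_\theta(\cdot\mid h_k,X)\big)\Big].
\]
This identity follows by adding and subtracting $\log p(Y\mid h_k,X)$ inside the expectation. Equivalently,
\[
\mathcal{L}^{(k)}_{\text{Coconut}}(\theta)=H(Y\mid h_k,X)+\mathbb{E}[D_{\text{KL}}].
\]

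\textbf{Step 2: minimize over $\theta$.} The KL term is nonnegative. By well-specification, it vanishes at $\theta^*$. Hence
\[
\min_\theta \mathcal{L}^{(k)}_{\text{Coconut}}=H(Y\mid h_k,X).
\]
Any minimizer coincides with the true conditional almost everywhere, since the KL term is zero exactly when the two conditionals agree.

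\textbf{Step 3: convert to mutual information.} Use
\[
H(Y\mid h_k,X)=H(Y\mid X)-I(h_k;Y\mid X).
\]
This is legitimate because $H(Y\mid X)<\infty$, so no $\infty-\infty$ occurs. The conditional entropy $H(Y\mid h_k,X)$ is also finite, being bounded by $H(Y\mid X)$. The term $H(Y\mid X)$ depends only on the data distribution, not on the encoder or decoder. Therefore minimizing the (decoder-optimized) loss jointly over the encoder $p(h_k\mid X)$ is equivalent to maximizing $I(h_k;Y\mid X)$, with identical optimizers and optimal values differing by the constant $H(Y\mid X)$. This is the claimed equivalence; the nested $\min_{p(h_k\mid X)}\min_\theta$ structure is what makes ``$\iff$'' precise.

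\textbf{Main obstacle.} The delicate point is measure-theoretic rather than algebraic. $h_k$ is continuous while $Y$ is discrete, so the conditional entropies must be interpreted as the discrete entropy of $Y$ given $(h_k,X)$. We also need the KL term to be well defined. I would handle this by working with regular conditional distributions and noting that $\mathbb{E}[-\log p_\theta]$ is either $+\infty$ (harmless for minimization) or finite, in which case the decomposition holds termwise. A second subtlety is that $h_k$ in Coconut is a deterministic function of $(X,S^{(1\dots k)})$. I would treat $p(h_k\mid X)$ as the marginal induced by that map, so that maximizing over encoders means maximizing over the admissible family of such maps.
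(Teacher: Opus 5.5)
Your proposal is correct and follows essentially the same route as the paper: split the loss into $H(S^{(k+1\dots M)}\mid h_k,X)$ plus an expected KL term, remove the KL term via well-specification, and then apply $H(S^{(k+1\dots M)}\mid h_k,X)=H(S^{(k+1\dots M)}\mid X)-I(h_k;S^{(k+1\dots M)}\mid X)$, where $H(S^{(k+1\dots M)}\mid X)$ does not depend on the encoder. Your explicit nested $\min_{p(h_k\mid X)}\min_\theta$ formulation and the finiteness check are cleaner than the paper, which silently identifies $\min_\theta$ with a minimization over encoders; note, though, that both arguments implicitly need well-specification to hold for every admissible encoder, not just for one fixed joint law.
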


\begin{proof}
Expand the loss function:
\begin{align*}
\mathcal{L}_{\text{Coconut}}^{(k)}(\theta) &= \mathbb{E}_{p(X,S,h_k)} [-\log p_\theta(S^{(k+1...M)} \mid h_k, X)] \\
&= \mathbb{E}_{p(X,h_k)} \left[ \mathbb{E}_{p(S^{(k+1...M)} \mid X,h_k)} [-\log p_\theta(S^{(k+1...M)} \mid h_k, X)] \right] \\
&= \mathbb{E}_{p(X,h_k)} \left[ \mathbb{E}_{p(S^{(k+1...M)} \mid X,h_k)} \left[-\log p(S^{(k+1...M)} \mid h_k,X) + \log \frac{p(S^{(k+1...M)} \mid h_k,X)}{p_\theta(S^{(k+1...M)} \mid h_k,X)}\right] \right] \\
&= \mathbb{E}_{p(X,h_k)} \left[ H(S^{(k+1...M)} \mid h_k,X) + D_{\text{KL}}(p(\cdot \mid h_k,X) \| p_\theta(\cdot \mid h_k,X)) \right].
\end{align*}
By the well-specified assumption, there exists $w^*$ such that $D_{\text{KL}}(p(\cdot \mid h_k,X) \| p_\theta(\cdot \mid h_k,X)) \to 0$. Therefore,
\[
\min_\theta \mathcal{L}_{\text{Coconut}}^{(k)}(\theta) = \min_{p(h_k \mid X)} H(S^{(k+1...M)} \mid h_k, X).
\]
By the definition of conditional mutual information, $I(A;B \mid C) = H(A \mid C) - H(A \mid B,C)$, we have
\[
H(S^{(k+1...M)} \mid h_k, X) = H(S^{(k+1...M)} \mid X) - I(h_k; S^{(k+1...M)} \mid X).
\]
Since $H(S^{(k+1...M)} \mid X)$ is constant, it follows that
\[
\min_{p(h_k \mid X)} H(S^{(k+1...M)} \mid h_k, X) \iff \max_{p(h_k \mid X)} I(h_k; S^{(k+1...M)} \mid X).
\]
\end{proof}
We now begin the proof of Theorem~\ref{thm:coconut_cib_duality}.

\begin{proof}
By Lemma~\ref{lemma:loss_mi}, minimizing the Coconut loss is equivalent to $\min_{p(h_k\mid X)} H(S^{(k+1...M)} \mid h_k, X)$. Any physically realizable encoder is constrained by finite model capacity, formally expressed as $I(h_k;S^{(1...k)} \mid X) \le R$. We construct the Lagrangian:
\[
\mathcal{J}(p, \lambda) = H(S^{(k+1...M)} \mid h_k, X) + \lambda \left( I(h_k;S^{(1...k)} \mid X) - R \right), \quad \lambda \ge 0.
\]
Using the identity $H(A \mid B, C) = H(A \mid C) - I(B; A \mid C)$, we rewrite:
\[
\mathcal{J} = \left[ H(S^{(k+1...M)} \mid X) - I(h_k;S^{(k+1...M)} \mid X) \right] + \lambda I(h_k;S^{(1...k)} \mid X) - \lambda R.
\]
Ignoring constants $H(S^{(k+1...M)} \mid X)$ and $\lambda R$ during minimization over $p(h_k\mid X)$, we obtain:
\[
\min_{p(h_k\mid X)} \left\{ \lambda I(h_k;S^{(1...k)} \mid X) - I(h_k;S^{(k+1...M)} \mid X) \right\}.
\]
Defining $\beta = 1/\lambda > 0$ and dividing through by $\lambda$, this becomes the standard CIB objective:
\[
\min_{p(h_k\mid X)} \left\{ I(h_k;S^{(1...k)} \mid X) - \beta I(h_k;S^{(k+1...M)} \mid X) \right\}.
\]

Denote $I_{\text{past}} = I(h_k; S^{(1...k)} \mid X)$ and $I_{\text{future}} = I(h_k; S^{(k+1...M)} \mid X)$. Any Pareto-optimal encoder lies on the efficiency frontier of the information plane, where the trade-off between retained and predicted information satisfies the first-order condition:
\[
\frac{d I_{\text{future}}}{d I_{\text{past}}} \bigg|_{h_k^*} = \frac{1}{\beta(k)}.
\]

We model the frontier as:
\[
I_{\text{future}}(I_{\text{past}}; k) = I_{\max}(k) \left(1 - e^{-\alpha(k) \cdot I_{\text{past}}}\right),
\]
with parameters motivated by:
\item $I_{\max}(k) \approx C_2 (M - k)$, proportional to the length of the remaining sequence.
\item $\alpha(k) \approx 1 / H(S^{(1...k)} \mid X) \approx 1 / (C_1 k + C_0)$, inversely proportional to the information content of past tokens.
Differentiating gives:
\[
\frac{d I_{\text{future}}}{d I_{\text{past}}} = I_{\max}(k) \cdot \alpha(k) \cdot e^{-\alpha(k) \cdot I_{\text{past}}}.
\]
At optimality, the encoder saturates its bottleneck: $I_{\text{past}} \approx H(S^{(1...k)} \mid X) \approx 1 / \alpha(k)$. Substituting:
\[
\frac{d I_{\text{future}}}{d I_{\text{past}}} \bigg|_{h_k^*} \approx I_{\max}(k) \cdot \alpha(k) \cdot e^{-1} \approx \frac{C_2 (M - k)}{e (C_1 k + C_0)}.
\]
Applying $\beta(k) = \left( d I_{\text{future}} / d I_{\text{past}} \right)^{-1}$:
\[
\beta(k) = e \cdot \frac{C_1 k + C_0}{C_2 (M - k)} = e \frac{C_1}{C_2} \cdot \frac{k}{M - k} + O\left(\frac{1}{M - k}\right).
\]
Thus, as $k \to M$, the dominant asymptotic behavior is $\beta(k) \sim \frac{k}{M - k}$, revealing that the optimal trade-off scales with the ratio of processed to remaining sequence length.
\end{proof}

\subsection{Proof of Theorem~\ref{thm:cot_exploration_deficiency}}
\label{app:proof_thm_cot_exploration_deficiency}

\begin{lemma}[Upper Bound on the Entropy $H(p_{\text{CoT}})$ of the CoT Output Distribution]\label{lemma:cot_entropy_bound}
Let $\bar{p} = \mathbb{E}[p_{\text{CoT}}]$ be the expected next-step distribution over $B$ options, drawn from a Dirichlet prior with parameters $\boldsymbol{\alpha}$ and concentration $\kappa = \sum_{i=1}^B \alpha_i$. Under high concentration — i.e., when $\alpha_j \gg \alpha_{i \ne j}$ for some $j$ — the entropy $H(\bar{p})$ vanishes as $\kappa$ grows. Specifically, if $\alpha_j = \kappa - (B-1)c$ and $\alpha_{i \ne j} = c$ for small $c > 0$, then
$$
H(\bar{p}) = O\left(\frac{\log \kappa}{\kappa}\right),
$$
\end{lemma}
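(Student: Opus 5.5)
The plan is a direct computation. The mean of a Dirichlet is $\bar p_i = \alpha_i/\kappa$. With the stated parametrization, this gives
\[
\bar p_j = 1 - \frac{(B-1)c}{\kappa}, \qquad \bar p_i = \frac{c}{\kappa} \ (i\neq j).
\]
So $\bar p$ is an explicit distribution with one dominant atom and $B-1$ atoms of size $c/\kappa$. We will write
\[
H(\bar p) = -\bar p_j\log\bar p_j - (B-1)\frac{c}{\kappa}\log\frac{c}{\kappa}
\]
and bound the two pieces separately as $\kappa\to\infty$, with $B$ and $c$ held fixed. We assume $\kappa > (B-1)c$ so that $\alpha_j>0$.

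\textbf{Tail term.} This term equals $\frac{(B-1)c}{\kappa}(\log\kappa - \log c)$. It is therefore $\frac{(B-1)c}{\kappa}\log\kappa + O(1/\kappa)$, which is $O(\log\kappa/\kappa)$. It is also the dominant contribution.

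\textbf{Dominant-atom term.} Set $t=(B-1)c/\kappa\to 0$. We use $-\log(1-t)\le t/(1-t)$, valid for $t<1$, to get
\[
-(1-t)\log(1-t)\le t = O(1/\kappa).
\]
Alternatively, a Taylor expansion gives $-(1-t)\log(1-t)=t+O(t^2)$. Adding the two pieces yields $H(\bar p)=\frac{(B-1)c}{\kappa}\log\kappa + O(1/\kappa) = O(\log\kappa/\kappa)$. For $\kappa\ge e$, the $O(1/\kappa)$ remainder is absorbed into the leading term, which gives the claim.

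\textbf{Main obstacle.} The only delicate point is interpretive rather than computational: the lemma bounds the entropy of the \emph{mean} distribution $\bar p$, not $\mathbb{E}[H(p_{\text{CoT}})]$. If the latter is needed downstream (Theorem~\ref{thm:cot_exploration_deficiency} speaks of $H(p_{\text{CoT}})$), we would add one line. Entropy is concave, so Jensen gives $\mathbb{E}[H(p_{\text{CoT}})]\le H(\bar p)$, and the same $O(\log\kappa/\kappa)$ bound transfers to the random distribution as well. With that remark the lemma also feeds directly into the vanishing-entropy claim of Theorem~\ref{thm:cot_exploration_deficiency}.
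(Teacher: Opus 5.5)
Your proposal is correct and follows the paper's proof essentially step for step: you compute $\bar p_i=\alpha_i/\kappa$ and split $H(\bar p)$ into the dominant-atom term, which is $O(1/\kappa)$, and the $B-1$ tail terms, which give $\frac{(B-1)c}{\kappa}(\log\kappa-\log c)=O(\log\kappa/\kappa)$. Two of your additions improve on the paper: the inequality $-\log(1-t)\le t/(1-t)$ makes rigorous the step the paper handles only via $\log(1-x)\approx -x$, and your Jensen remark $\mathbb{E}[H(p_{\text{CoT}})]\le H(\bar p)$ connects the lemma (which is about $\bar p$) to the vanishing of $H(p_{\text{CoT}})$ claimed in Theorem~\ref{thm:cot_exploration_deficiency}.
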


\begin{proof}
By properties of the Dirichlet distribution, the expected probabilities are given by $\bar{p}_i = \alpha_i / \kappa$. We consider a typical unimodal scenario in which the vast majority of the weight concentrates on a single option $j$. We set $\alpha_j = \kappa - (B-1)c$ and $\alpha_{i \ne j} = c$ for all $i \ne j$, where $c$ is a small positive constant.

Under this setting, the expected probabilities are:
\begin{align*}
    \bar{p}_j &= \frac{\kappa - (B-1)c}{\kappa} = 1 - \frac{(B-1)c}{\kappa} \\
    \bar{p}_{i \ne j} &= \frac{c}{\kappa}
\end{align*}
We compute the entropy of this distribution:
\begin{align*}
    H(\bar{p}) &= -\bar{p}_j \log \bar{p}_j - \sum_{i \ne j} \bar{p}_{i} \log \bar{p}_{i} \\
    &= -\left(1 - \frac{(B-1)c}{\kappa}\right) \log\left(1 - \frac{(B-1)c}{\kappa}\right) - (B-1)\frac{c}{\kappa} \log\left(\frac{c}{\kappa}\right)
\end{align*}
We analyze the first term using the Taylor expansion $\log(1-x) \approx -x$. As $\kappa \to \infty$, we have $\frac{(B-1)c}{\kappa} \to 0$, and thus:
\[
-\left(1 - \frac{(B-1)c}{\kappa}\right) \log\left(1 - \frac{(B-1)c}{\kappa}\right) \approx -\left(1 - \frac{(B-1)c}{\kappa}\right) \left(-\frac{(B-1)c}{\kappa}\right) = O\left(\frac{1}{\kappa}\right).
\]
Next, we analyze the second term:
\begin{align*}
    -(B-1)\frac{c}{\kappa} \log\left(\frac{c}{\kappa}\right) &= -(B-1)\frac{c}{\kappa}(\log c - \log \kappa) \\
    &= (B-1)c \frac{\log \kappa}{\kappa} - (B-1)c\frac{\log c}{\kappa} = O\left(\frac{\log \kappa}{\kappa}\right).
\end{align*}
Combining both terms, as $\kappa$ becomes large, the entropy is dominated by the $O\left(\frac{\log \kappa}{\kappa}\right)$ term. And we have $\lim_{\kappa\to\infty} \frac{\log \kappa}{\kappa} = 0$.
\end{proof}

We now begin the proof of Theorem~\ref{thm:cot_exploration_deficiency}.

\begin{proof}
We analyze the almost sure asymptotic behavior of $D_{\text{KL}}(q_{\text{PR}} \| p_{\text{CoT}})$ under the concentrated Dirichlet prior where $\alpha_j = \kappa - (B-1)c$ and $\alpha_{i \ne j} = c$.

By the law of large numbers for Dirichlet distributions, as $\kappa \to \infty$, the random vector $p_{\text{CoT}}$ converges almost surely to its mean:
\[
p_{\text{CoT}} \xrightarrow{a.s.} \bar{p} = \left(1 - \frac{(B-1)c}{\kappa}, \frac{c}{\kappa}, \ldots, \frac{c}{\kappa}\right).
\]

Since the KL divergence is a continuous function of the probability vector (away from the boundary), we have:
\[
D_{\text{KL}}(q_{\text{PR}} \| p_{\text{CoT}}) \xrightarrow{a.s.} D_{\text{KL}}(q_{\text{PR}} \| \bar{p}) \quad \text{as } \kappa \to \infty.
\]

Now compute $D_{\text{KL}}(q_{\text{PR}} \| \bar{p})$:
\begin{align*}
D_{\text{KL}}(q_{\text{PR}} \| \bar{p}) 
&= \sum_{i=1}^B \frac{1}{B} \log \frac{1/B}{\bar{p}_i} \\
&= -\log B - \frac{1}{B} \sum_{i=1}^B \log \bar{p}_i \\
&= -\log B - \frac{1}{B} \left[ \log\left(1 - \frac{(B-1)c}{\kappa}\right) + (B-1)\log\left(\frac{c}{\kappa}\right) \right].
\end{align*}

Using Taylor expansion $\log(1 - x) = -x + O(x^2)$:
\begin{align*}
D_{\text{KL}}(q_{\text{PR}} \| \bar{p}) 
&= -\log B - \frac{1}{B} \left[ -\frac{(B-1)c}{\kappa} + O\left(\frac{1}{\kappa^2}\right) + (B-1)\log c - (B-1)\log \kappa \right] \\
&= \frac{B-1}{B}\log\kappa - \log B - \frac{(B-1)\log c}{B} + O\left(\frac{1}{\kappa}\right).
\end{align*}

Therefore, almost surely as $\kappa \to \infty$:
\[
D_{\text{KL}}(q_{\text{PR}} \| p_{\text{CoT}}) = \frac{B-1}{B}\log\kappa - \log B - \frac{(B-1)\log c}{B} + \mathcal{O}\left(\frac{1}{\kappa}\right).
\]

Since $H(p_{\text{CoT}}) \to 0$ as established in Lemma~\ref{lemma:cot_entropy_bound}, and $D_{\text{KL}}(q_{\text{PR}} \| p_{\text{CoT}})$ grows logarithmically in $\kappa$, this proves the exploration deficiency of CoT.
\end{proof}

\subsection{Proof of Theorem~\ref{thm:latent_cot_exploration_guarantee}}
\label{app:proof_thm_latent_cot_exploration_guarantee}

\begin{assumption}[Convergence and Compactness of Latent States]
\label{assump:state_convergence}
We assume that during inference, the sequence of latent states $\{h_k\}$ converges to a fixed point or enters a compact attracting set as the reasoning step $k$ increases. This is not just a theoretical convenience; it is a behavior we consistently observe empirically. As shown in Figure~\ref{fig:pca_convergence}, a PCA visualization of the latent thought embeddings reveals a clear convergent trajectory, where successive states $\{L1, \dots, L6\}$ move progressively closer within a bounded region. The high explained variance (0.99) of the first two principal components confirms that this 2D projection faithfully represents the dynamics in the original high-dimensional space.
\end{assumption}

\begin{figure}[h!]
    \centering
    \begin{subfigure}{0.49\textwidth}
        \centering
        \includegraphics[width=\linewidth]{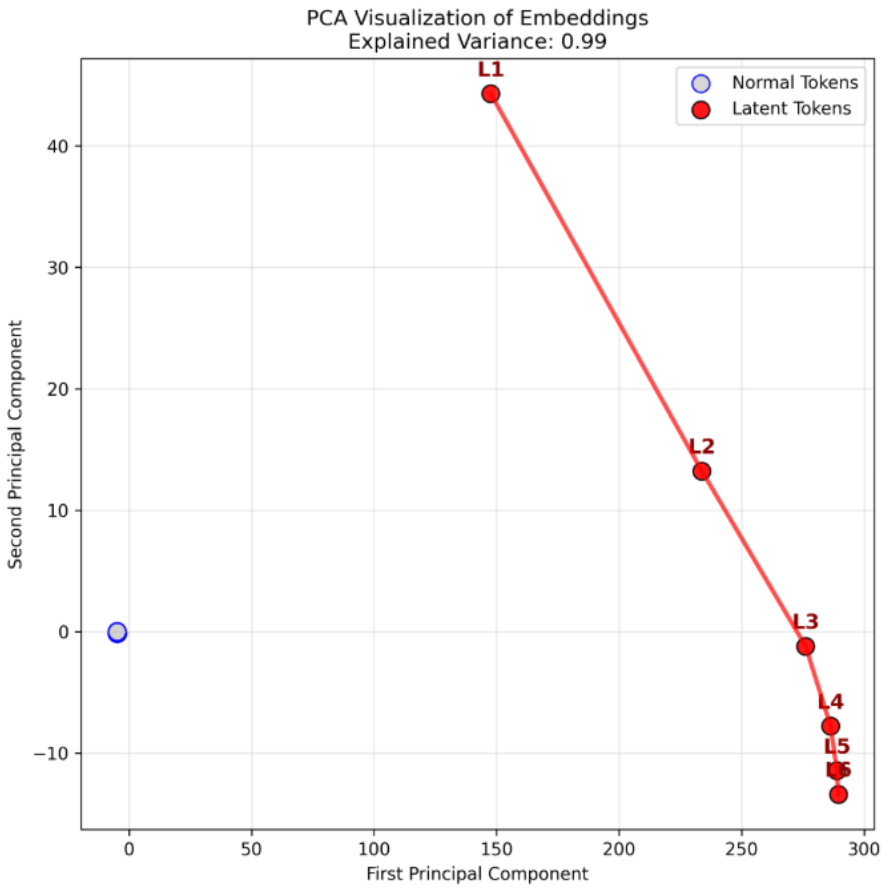}
        \caption{PCA visualization on GSM8K task.}
        \label{fig:pca_gsm8k}
    \end{subfigure}
    \hfill
    \begin{subfigure}{0.49\textwidth}
        \centering

        \includegraphics[width=\linewidth]{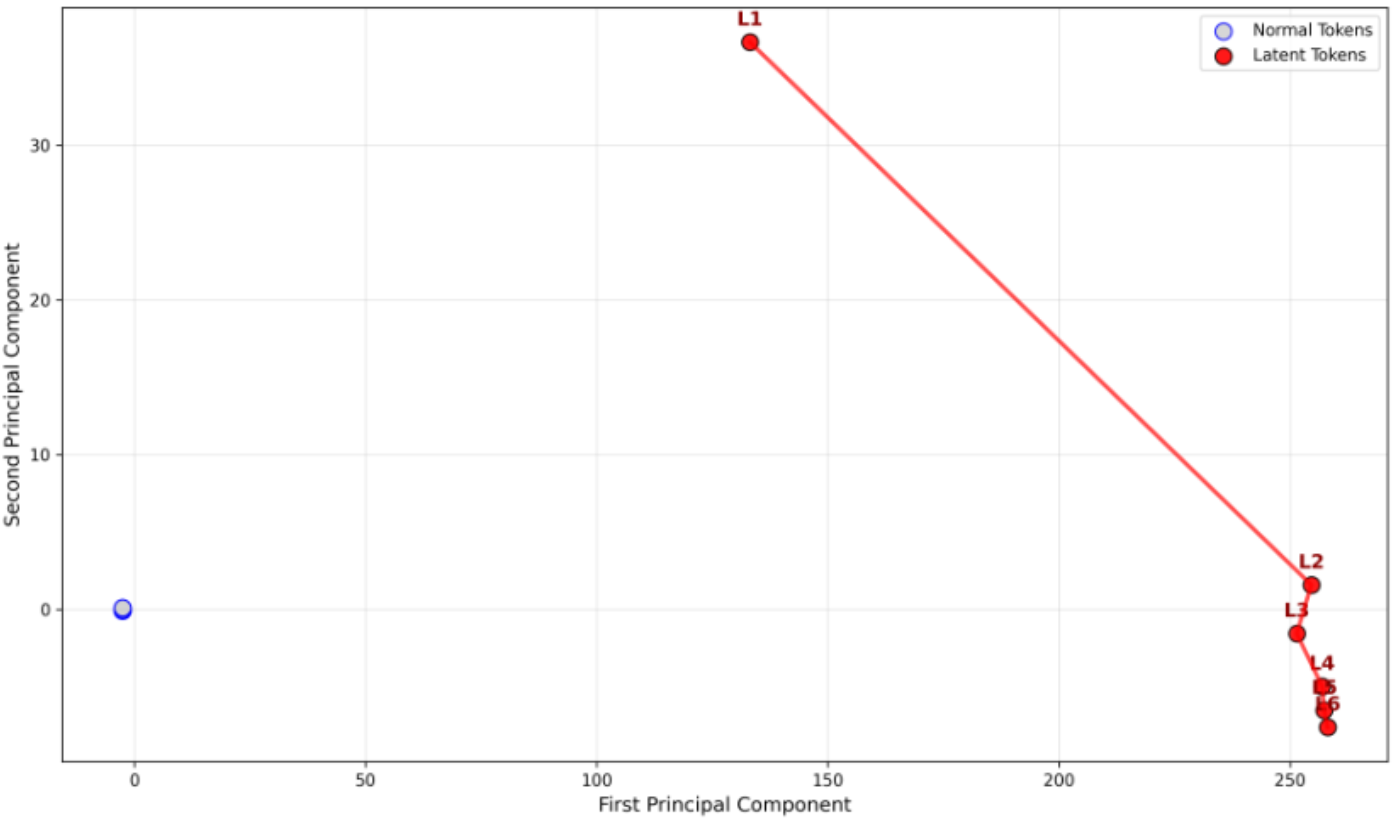} 
        \caption{PCA visualization on ProsQA task.}
        \label{fig:pca_prosqa}
    \end{subfigure}

    \caption{PCA visualization of latent state embeddings for reasoning trajectories. In both (a) and (b), the latent thoughts (L1-L6) demonstrate a clear convergence, supporting our assumption of a compact, attracting set for latent states. This validates that the convergence dynamic is a general property of the latent reasoning process, not specific to a single task.}
    \label{fig:pca_convergence}
\end{figure}

\begin{lemma}[Non-Degenerate Output Distribution of Coconut]\label{lem:coconut_non_degenerate}
For a model trained via the Coconut objective, its optimization can be modeled as solving a Conditional Information Bottleneck (CIB) problem with a finite trade-off parameter $\beta(k) > 0$ (Theorem~\ref{thm:coconut_cib_duality}). Under this framework, the model's output probability distribution is non-degenerate: there exists a constant $\delta > 0$ such that for any valid next token $u$ and any reachable latent state $h$, the probability is strictly bounded away from 1:
$$
\sup_{h \in \mathcal{H}} \left( \max_{u \in N_{\text{valid}}(v)} p(u \mid h)\right) \le 1 - \delta,
$$
where $\mathcal{H}$ is the compact latent state space guaranteed by Assumption~\ref{assump:state_convergence}.
\end{lemma}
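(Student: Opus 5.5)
The argument has two parts: a pointwise statement and a uniformity statement. Pointwise, we show that at every reachable latent state $h$ the optimal CIB encoder/decoder pair of Theorem~\ref{thm:coconut_cib_duality} assigns positive mass to every valid next step. Uniformity then comes from compactness of $\mathcal{H}$ (Assumption~\ref{assump:state_convergence}) together with continuity of $h \mapsto p(\cdot \mid h)$.

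For the pointwise step we use the standard self-consistent solution of the (conditional) information bottleneck. Stationarity of the dual objective in \eqref{eq:dual} with finite $\beta(k)$ gives an encoder of Gibbs form
\[
p(h \mid s, x) \propto p(h \mid x)\exp\!\bigl(-\beta(k)\, D_{\text{KL}}(p(\cdot \mid s, x) \,\|\, p(\cdot \mid h, x))\bigr).
\]
Because $\beta(k) < \infty$, this exponential weight is strictly positive. Every latent $h$ in the support therefore mixes contributions from all past prefixes $s$ compatible with it, and the decoder $p(u \mid h, x) = \sum_s p(s \mid h, x)\, p(u \mid s, x)$ is a genuine mixture. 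Each valid $u \in N_{\text{valid}}(v)$ is the true continuation for some prefix with positive probability, so $p(u \mid h) > 0$ for every valid $u$. When $|N_{\text{valid}}(v)| \ge 2$, this forces $\max_u p(u \mid h) < 1$. Put differently, collapsing onto one option would require $h$ to determine $s$ exactly, which costs unbounded $I_{\text{past}}$. The Lagrange multiplier $\lambda = 1/\beta(k) > 0$ and the finite capacity $R$ rule this out. I would phrase the key point as: a degenerate decoder at $h$ implies the posterior $p(s \mid h, x)$ is a point mass, contradicting positivity of the Gibbs weights.

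For uniformity, define $g(h) = \max_{u \in N_{\text{valid}}(v)} p(u \mid h)$. This is continuous, being a maximum of finitely many softmax outputs of a continuous network. Since $\mathcal{H}$ is compact, $g$ attains its supremum at some $h^\star \in \mathcal{H}$. The pointwise step gives $g(h^\star) < 1$, so we can set $\delta = 1 - g(h^\star) > 0$. Taking the maximum over the finitely many decision nodes $v$ keeps $\delta$ positive.

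The main obstacle is the pointwise step. The Gibbs-form argument is only rigorous at an exact stationary point of the CIB problem, and the duality of Theorem~\ref{thm:coconut_cib_duality} was established under an idealized, well-specified model. I expect to need an explicit assumption that the trained model realizes (or approximates) the CIB optimum and that $h$-reachable prefixes share support. This assumption would sit alongside the requirement that $N_{\text{valid}}(v)$ has at least two elements, since otherwise the bound fails trivially.
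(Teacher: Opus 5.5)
Your outer structure matches the paper's. Both arguments reduce the supremum over the compact $\mathcal{H}$ to its value at a single maximizer via continuity, then argue that this value cannot be $1$. The pointwise step is where you diverge, and it has a genuine gap.

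The Gibbs weight $\exp(-\beta(k)\, D_{\text{KL}}(p(\cdot \mid s,x)\,\|\,p(\cdot \mid h,x)))$ is positive only if the KL divergence is finite. The divergence is infinite exactly when $p(\cdot \mid h,x)$ fails to cover the support of $p(\cdot \mid s,x)$. That is the degenerate situation you are trying to rule out, and there the weight is $0$ for every finite $\beta(k)$. So the chain ``finite $\beta$ $\Rightarrow$ positive weights $\Rightarrow$ genuine mixture'' presupposes its conclusion, and hard-clustering encoders satisfy the self-consistent equations.

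Such encoders can even be the global CIB optimum at finite $\beta$. Fix $x$, let $S^{(1\dots k)}\in\{s_1,s_2\}$ be equiprobable, and let the future be a distinct deterministic function of the past, with next steps $u_1\neq u_2$. For every encoder $I(h;S^{(k+1\dots M)}\mid X)=I(h;S^{(1\dots k)}\mid X)$, so the dual objective equals $(1-\beta)\,I(h;S^{(1\dots k)}\mid X)$. For any finite $\beta>1$ (e.g.\ $k$ near $M$ under $\beta(k)\sim k/(M-k)$), this is minimized by $h=s$. That encoder costs only $I_{\text{past}}=\log 2\le R$ and gives $p(u_1\mid h=s_1)=1$. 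The same example refutes your remark that collapse ``costs unbounded $I_{\text{past}}$'': for finite alphabets $I_{\text{past}}\le H(S^{(1\dots k)}\mid X)<\infty$.

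The paper does not close this step from finiteness of $\beta(k)$ alone. It uses the frontier model imported from the proof of Theorem~\ref{thm:coconut_cib_duality}, $I_{\text{future}}=I_{\max}(k)(1-e^{-\alpha(k)I_{\text{past}}})$, which by construction attains $I_{\max}(k)$ only as $I_{\text{past}}\to\infty$. A degenerate $h^*$ is then placed at a point of zero frontier slope, contradicting the optimality condition $dI_{\text{future}}/dI_{\text{past}}=1/\beta(k)>0$. In the paper, too, non-degeneracy is an input of the model rather than a consequence of finite $\beta(k)$.

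Your ``unbounded $I_{\text{past}}$'' sentence is exactly this non-saturation property, but you assert it rather than assume it. In the example above the frontier is linear and saturates at $I_{\text{past}}=\log 2$. To follow the paper, you must adopt that modeling assumption explicitly.

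The repair you suggest instead, a shared-support condition, is a different and much stronger hypothesis. If every prefix (finitely many) gives every valid $u$ positive probability, then $p(u\mid h,x)=\sum_s p(s\mid h,x)\,p(u\mid s,x)\ge\min_s p(u\mid s,x)>0$ for \emph{any} encoder. Given $|N_{\text{valid}}(v)|\ge 2$, this yields a uniform $\delta$ immediately, which makes $\beta(k)$, the Gibbs form, and compactness all irrelevant.
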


\begin{proof}
Assume, for contradiction, that $\sup_{h \in \mathcal{H}} \max_{u} p(u \mid h) = 1$. Since $\mathcal{H}$ is compact and the decoder $p(u \mid h)$ is continuous, there must exist a latent state $h^* \in \mathcal{H}$ and a token $u^*$ such that $p(u^* \mid h^*) = 1$.

At $h^*$, future uncertainty is fully eliminated: $H(S^{(k+1...M)} \mid h^*, X) = 0$, implying that the predictive information $I(h; S^{(k+1...M)} \mid X)$ achieves its theoretical maximum.

By Theorem~\ref{thm:coconut_cib_duality}, the Coconut solution corresponds to an optimal solution of a CIB problem. On the information plane $(I_{\text{past}}, I_{\text{future}})$, all optimal solutions must lie on the efficiency frontier, and any optimal point $h_{\text{opt}}$ must satisfy:
\[
\frac{d I_{\text{future}}}{d I_{\text{past}}} \bigg|_{h_{\text{opt}}} = \frac{1}{\beta(k)}.
\]

At our hypothetical point $h^*$, $I_{\text{future}}$ reaches its maximum value $I_{\max}(k)$. Under the efficiency frontier model adopted in Theorem~\ref{thm:coconut_cib_duality}:
\[
I_{\text{future}}(I_{\text{past}}; k) = I_{\max}(k) \left(1 - e^{-\alpha(k) \cdot I_{\text{past}}}\right),
\]
which is strictly increasing and strictly concave in $I_{\text{past}}$, with derivative:
\[
\frac{d I_{\text{future}}}{d I_{\text{past}}} = I_{\max}(k) \cdot \alpha(k) \cdot e^{-\alpha(k) \cdot I_{\text{past}}}.
\]
As $I_{\text{future}} \to I_{\max}(k)$, we must have $I_{\text{past}} \to \infty$, and the derivative $\to 0^+$. If $I_{\text{future}}$ were to reach $I_{\max}(k)$ at a finite $I_{\text{past}}$ (as at $h^*$), the derivative must be zero to match the asymptotic behavior. Thus, at $h^*$:
\[
\frac{d I_{\text{future}}}{d I_{\text{past}}} \bigg|_{h^*} = 0.
\]
If $h^*$ were a CIB optimum, it must satisfy:
\[
\frac{d I_{\text{future}}}{d I_{\text{past}}} \bigg|_{h^*} = \frac{1}{\beta(k)},
\]
implying $1/\beta(k) = 0$, i.e., $\beta(k) \to \infty$. However, by Theorem~\ref{thm:coconut_cib_duality}, at any non-terminal stage of curriculum learning ($k < M$), $\beta(k)$ is a finite positive constant (with asymptotic behavior $\beta(k) \sim k/(M-k) < \infty$), so $1/\beta(k) > 0$. This contradicts the requirement that the slope be zero.

Therefore, the assumption $p(u^* \mid h^*) = 1$ leads to a contradiction. Hence, $\sup \max p(u \mid h)$ must be strictly less than 1. The lemma is proved.

\noindent\textbf{Intuition:} In essence, Coconut inherently avoids deterministic collapse by design: its information bottleneck objective enforces a trade-off that preserves output diversity, even at advanced reasoning stages.
\end{proof}
We now begin the proof of Theorem~\ref{thm:latent_cot_exploration_guarantee}
\begin{proof}
By Lemma~\ref{lem:coconut_non_degenerate}, there exists $\delta > 0$ such that for any valid next token $u$ and reachable state $h$, the probability satisfies $\max_u p(u \mid h) \leq 1 - \delta$. Our goal is to derive an upper bound for the KL divergence $D_{\text{KL}}(q_{\text{PR}} \| p_{\text{Coconut}})$.

By definition, 
\[
D_{\text{KL}}(q_{\text{PR}} \| p) = -\log B - \frac{1}{B} \sum_{u} \log p(u \mid v).
\]
To bound this quantity from above, we seek the probability distribution that maximizes $D_{\text{KL}}$ under the constraints $\sum p_i = 1$ and $\max p_i \le 1 - \delta$, which is equivalent to minimizing $\sum \log p(u \mid v)$.

The minimum of $\sum \log p_i$ is achieved by the most concentrated distribution satisfying the constraint:
\[
p(u^* \mid v) = 1 - \delta, \quad \text{and for all } u \ne u^*, \quad p(u \mid v) = \frac{\delta}{B-1}.
\]
Substituting this worst-case distribution into the KL divergence expression yields an upper bound:
\begin{align*}
    D_{\text{KL}}(q_{\text{PR}} \| p_{\text{Coconut}}) &\leq -\log B - \frac{1}{B} \left[ \log(1 - \delta) + (B-1) \log \left( \frac{\delta}{B-1} \right) \right] \\
    &= -\frac{B-1}{B} \log\delta - \left[ \log B - \frac{B-1}{B} \log(B-1) + \frac{\log(1 - \delta)}{B} \right].
\end{align*}
Denote the bracketed term as $f(B) = \log B - \frac{B-1}{B} \log(B-1) + \frac{\log(1 - \delta)}{B}$. Since $f(B)$ is continuous and bounded on $[2, \infty)$ (with $\lim_{B\to\infty} f(B) = 0$), its infimum $c = \inf_{B \ge 2} f(B)$ is a finite constant. Therefore, $-f(B)$ is upper-bounded by $-c$.

Meanwhile, since $B \ge 2$, we have $\frac{B-1}{B} \ge \frac{1}{2}$. Also, since $\delta \in (0, 1)$, it follows that $\log\delta < 0$, and thus $-\frac{B-1}{B}\log\delta \le -\frac{1}{2}\log\delta$.

Combining these bounds, the KL divergence admits a finite upper bound independent of the branching factor $B$:
\[
D_{\text{KL}}(q_{\text{PR}} \| p_{\text{Coconut}}) \leq -\frac{B-1}{B} \log\delta - c \leq -\frac{1}{2}\log\delta - c.
\]
\end{proof}

\subsection{Proof of Theorem~\ref{thm:cot_final_state}}
\label{app:proof_thm_cot_final_state}

\begin{proof}
We prove that under sub-decisional perturbations, the Explicit CoT trajectory $\hat{S}$ is identical to the noise-free trajectory $S^*$ by induction on the reasoning step $k$.

Let $f_\theta(\cdot)$ denote the deterministic forward pass of the model. At step $k$, the model takes the input $x$ and the discrete history $S^{(1 \dots k-1)}$ to produce a logit vector $l_k^* = f_\theta(x, S^{(1 \dots k-1)})$.
The noise-free token selection is given by $s_k^* = \operatorname{argmax}(l_k^*)$.
The perturbed process generates logits $\hat{l}_k = l_k^* + \epsilon_k$, where $\epsilon_k$ satisfies the sub-decisional condition: $\operatorname{argmax}(l_k^* + \epsilon_k) = \operatorname{argmax}(l_k^*)$.

At the first step, both the clean and perturbed processes receive the exact same input $x$ (with an empty history). The clean logits are $l_1^* = f_\theta(x, \emptyset)$. The perturbed logits are $\hat{l}_1 = l_1^* + \epsilon_1$.
By the definition of sub-decisional perturbation:
$$
\hat{s}_1 = \operatorname{argmax}(\hat{l}_1) = \operatorname{argmax}(l_1^* + \epsilon_1) = \operatorname{argmax}(l_1^*) = s_1^*.
$$
Thus, the first token is identical, and the history passed to step 2 is error-free.

Assume that for all steps $t < k$, the generated tokens are identical, i.e., $\hat{s}_t = s_t^*$. This implies the discrete history is identical: $\hat{S}^{(1 \dots k-1)} = S^{*(1 \dots k-1)}$.
At step $k$, since the model $f_\theta$ is deterministic and conditioned strictly on the discrete history:
$$
\text{Logits}_{\text{pre-noise}} = f_\theta(x, \hat{S}^{(1 \dots k-1)}) = f_\theta(x, S^{*(1 \dots k-1)}) = l_k^*.
$$
The noise process adds the perturbation: $\hat{l}_k = l_k^* + \epsilon_k$.
Applying the sub-decisional constraint again, the discrete projection filters out the noise:
$$
\hat{s}_k = \operatorname{argmax}(l_k^* + \epsilon_k) = s_k^*.
$$
Consequently, the noise term $\epsilon_k$ is discarded by the argmax operator and does not propagate to the state of the next step.

By induction, $\hat{s}_k = s_k^*$ for all $k=1, \dots, M$. Therefore, the probability of trajectory divergence is zero:
$$ \mathbb{P}[\hat{S} \neq S^*] = 0. $$
This mathematically confirms the \textit{discretization-reset} mechanism described in Section~\ref{exploitation_abli}: the argmax operator effectively acts as a non-linear filter that resets the internal state to a clean integer grid at every step, preventing the accumulation of sub-decisional noise.
\end{proof}

\subsection{Proof of Theorem~\ref{thm:coconut_final_state}}
\label{app:proof_thm_coconut_final_state}

\begin{proof}
From the recurrence
\[
E_k = f_\theta(h_{k-1}) - f_\theta(h_{k-1}^*) + \epsilon_h^{(k)},
\]
taking the squared norm and expectation, and using the independence and zero-mean property of the noise (which causes cross terms to vanish), we obtain:
\[
\mathbb{E}[\|E_k\|^2] \le L_F^2 \, \mathbb{E}[\|E_{k-1}\|^2] + d\sigma_h^2.
\]
This is a non-homogeneous linear recurrence. Solving it with initial condition $\mathbb{E}[\|E_0\|^2] = 0$ yields:
\[
\mathbb{E}[\|E_M\|^2] = 
\begin{cases}
\displaystyle \frac{1 - L_F^{2M}}{1 - L_F^2} \cdot d\sigma_h^2, & L_F \neq 1, \\[0.8em]
M \cdot d\sigma_h^2, & L_F = 1.
\end{cases}
\]
In both cases, $\mathbb{E}[\|E_M\|^2] > 0$ for all $M \ge 1$, which completes the proof.
\end{proof}

\subsection{Derivation of the Normalized Accuracy Function}
\label{app:accuracy_drop_derivation}

In this section, we provide a rigorous derivation of the \textbf{Normalized Accuracy} curve $A(\sigma)$ to explain the characteristic inverted S-shape (sigmoid-like) degradation observed in Figure~\ref{fig:noise_model_comp}. We model the causal chain from injected noise in the latent space to the probabilistic failure of token ranking in the output space.

We begin by modeling the accumulation of noise in the hidden states. Let $h_M^*$ denote the clean latent state at the final reasoning step $M$. Consistent with Theorem~\ref{thm:coconut_final_state} (Compounding Error), the injection of Gaussian noise $\epsilon \sim \mathcal{N}(0, \sigma^2 I_d)$ at each step accumulates over the reasoning trajectory. Assuming a linearized approximation of the transition dynamics, the final perturbed state $\tilde{h}_M$ can be expressed as:
\begin{equation*}
    \tilde{h}_M = h_M^* + \epsilon_{\text{acc}}, \quad \text{where} \quad \epsilon_{\text{acc}} \sim \mathcal{N}(0, \sigma_{\text{eff}}^2 I_d).
\end{equation*}
Here, $\epsilon_{\text{acc}}$ represents the effective accumulated noise vector in $\mathbb{R}^d$. The variance $\sigma_{\text{eff}}^2 \approx \alpha \cdot \sigma^2$ scales linearly with the injection variance $\sigma^2$, where the scaling factor $\alpha$ depends on the path length $M$ and the Lipschitz constant of the network.

Since discrete token decisions are not made in the latent space but in the logit space, we must trace how this perturbation propagates through the unembedding layer. Let $W_U \in \mathbb{R}^{|\mathcal{V}| \times d}$ be the weight matrix of the language model head. The perturbed logits $\tilde{l}$ are obtained by projecting the final latent state:
\begin{equation*}
    \tilde{l} = W_U \tilde{h}_M = W_U (h_M^* + \epsilon_{\text{acc}}) = l^* + \eta,
\end{equation*}
where $l^*$ denotes the clean logits, and $\eta = W_U \epsilon_{\text{acc}}$ represents the noise projected into the logit space. Since $\epsilon_{\text{acc}}$ is Gaussian and $W_U$ is a linear transformation, $\eta$ follows a multivariate Gaussian distribution over $\mathbb{R}^{|\mathcal{V}|}$.

A prediction is successful if the noise is insufficient to alter the ranking of the top tokens. Let $i^* = \operatorname{argmax}(l^*)$ be the index of the correct token, and let $j^*$ be the index of the strongest competitor. The model retains the correct prediction if $\tilde{l}_{i^*} > \tilde{l}_{j^*}$. Substituting $\tilde{l} = l^* + \eta$, this condition becomes:
\begin{equation*}
    l^*_{i^*} + \eta_{i^*} > l^*_{j^*} + \eta_{j^*} \implies \eta_{j^*} - \eta_{i^*} < l^*_{i^*} - l^*_{j^*}.
\end{equation*}
Crucially, the Right-Hand Side is precisely the Logit Decision Margin $\Delta_l$ defined in Definition~\ref{def:logit_margin_revised}. Therefore, the success condition simplifies to:
\begin{equation*}
    \xi < \Delta_l, \quad \text{where} \quad \xi = \eta_{j^*} - \eta_{i^*}.
\end{equation*}

To quantify the probability of this event, we analyze the distribution of the random variable $\xi$. We express $\xi$ as a linear combination of the accumulated noise vector: $\xi = (w_{j^*} - w_{i^*})^\top \epsilon_{\text{acc}}$. Because $\epsilon_{\text{acc}}$ is isotropic Gaussian, $\xi \sim \mathcal{N}(0, \sigma_1^2)$. The scalar variance $\sigma_1^2$ is given by:
\begin{equation*}
    \sigma_1^2 = \|w_{j^*} - w_{i^*}\|_2^2 \cdot \sigma_{\text{eff}}^2.
\end{equation*}
Letting $C = \alpha \cdot \|w_{j^*} - w_{i^*}\|_2^2$, we derive the relationship $\sigma_1^2 = C \cdot \sigma^2$. Thus, $\sigma_1 = \sqrt{C} \cdot \sigma$.

The Normalized Accuracy $A(\sigma)$ is defined as the probability of success $\mathbb{P}(\xi < \Delta_l)$. We evaluate this using the CDF of the standard normal distribution, $\Phi(z)$. By standardizing via $Z = \xi / \sigma_1 \sim \mathcal{N}(0, 1)$, we obtain:
\begin{align}
    A(\sigma) &= \mathbb{P}(\xi < \Delta_l) \nonumber \\
    &= \mathbb{P}\left( \frac{\xi}{\sigma_1} < \frac{\Delta_l}{\sigma_1} \right) \nonumber \\
    &= \Phi\left( \frac{\Delta_l}{\sqrt{C} \cdot \sigma} \right).
\end{align}

This derivation provides a rigorous physical interpretation for the experimental results. In the limit as $\sigma \to 0$, the argument $\frac{\Delta_l}{\sqrt{C}\sigma} \to \infty$, and $A(\sigma) \to 1$. This explains the ``sub-decisional plateau,'' where accuracy remains stable because the noise is strictly contained within the safety margin $\Delta_l$. As $\sigma$ increases, the argument decreases, leading to the characteristic sigmoid-like decay observed in the accuracy curve. This confirms that robustness is analytically determined by the ratio between the Symbolic Index-induced margin $\Delta_l$ and the accumulated noise variance.

\subsection{Proof of Theorem~\ref{thm:symbolic_stability_revised}}
\label{app:symbolic_and_margin}
\begin{proof}
The Logit Decision Margin $\Delta_l$ is defined as the difference between the logits of the most and second-most likely tokens, $\Delta_l = l_{i^*} - l_{j^*}$. We aim to establish a lower bound for this quantity based on the Symbolic Index, $\mathcal{I}_{\text{S}}$.

First, we relate the logits to probabilities using the softmax function, $p_i = e^{l_i} / \sum_k e^{l_k}$. The ratio of the probabilities of the top two tokens is:
\[
\frac{p_{i^*}}{p_{j^*}} = \frac{e^{l_{i^*}}}{e^{l_{j^*}}} = e^{l_{i^*} - l_{j^*}} = e^{\Delta_l}.
\]
By taking the natural logarithm of both sides, we can express the logit margin in terms of probabilities:
\[
\Delta_l = \log(p_{i^*}) - \log(p_{j^*}).
\]
By the definition of the Symbolic Index, the probability of the most likely token is $p_{i^*} = \mathcal{I}_{\text{S}}$.

Next, we establish an upper bound for the probability of the second-most likely token, $p_{j^*}$. The sum of probabilities over all possible tokens is 1. Therefore, the sum of probabilities of all tokens except the most likely one is $1 - p_{i^*} = 1 - \mathcal{I}_{\text{S}}$. Since $p_{j^*}$ is the largest among these remaining probabilities, it cannot be greater than their sum. Thus, we have the inequality:
\[
p_{j^*} \le 1 - \mathcal{I}_{\text{S}}.
\]
Because the logarithm function is monotonically increasing, this implies $\log(p_{j^*}) \le \log(1 - \mathcal{I}_{\text{S}})$. Consequently, its negative, $-\log(p_{j^*})$, satisfies the reverse inequality: $-\log(p_{j^*}) \ge -\log(1 - \mathcal{I}_{\text{S}})$.

Finally, we substitute our findings back into the equation for $\Delta_l$:
\[
\Delta_l = \log(\mathcal{I}_{\text{S}}) - \log(p_{j^*}) \ge \log(\mathcal{I}_{\text{S}}) - \log(1 - \mathcal{I}_{\text{S}}).
\]
Using the properties of logarithms, we arrive at the desired lower bound:
\[
\Delta_l \ge \log\left(\frac{\mathcal{I}_{\text{S}}}{1 - \mathcal{I}_{\text{S}}}\right).
\]
\end{proof}

\subsection{Proof of Theorem~\ref{thm:explore_exploit_tradeoff_revised}}
\label{app:tradeoff}
\begin{proof}
The KL divergence between the ideal uniform prior $q_{\text{PR}}$ (where $q_i = 1/B$ for all $i=1,\dots,B$) and the model's output distribution $p$ is given by:
\[
D_{\text{KL}}(q_{\text{PR}} \| p) = \sum_{i=1}^B q_i \log\left(\frac{q_i}{p_i}\right) = \sum_{i=1}^B \frac{1}{B} \log\left(\frac{1/B}{p_i}\right) = -\log B - \frac{1}{B}\sum_{i=1}^B \log p_i.
\]
To find a lower bound for $D_{\text{KL}}(q_{\text{PR}} \| p)$, we must find an upper bound for the term $\sum_{i=1}^B \log p_i$, subject to the constraints imposed by the model's distribution. The constraints are:
\begin{enumerate}
    \item $\sum_{i=1}^B p_i = 1$
    \item $\max_i p_i = \mathcal{I}_{\text{S}}$
\end{enumerate}
We want to solve the following optimization problem:
\[
\max_{p_1, \dots, p_B} \sum_{i=1}^B \log p_i \quad \text{s.t.} \quad \sum p_i = 1 \text{ and } \max p_i = \mathcal{I}_{\text{S}}.
\]
Let $p_{i^*} = \mathcal{I}_{\text{S}}$ be the maximal probability. The sum of the remaining $B-1$ probabilities is $\sum_{k \neq i^*} p_k = 1 - \mathcal{I}_{\text{S}}$. The function $f(p_1, \dots, p_{B-1}) = \sum_{k \neq i^*} \log p_k$ is concave. By Jensen's inequality, this sum is maximized when the remaining probabilities are distributed as uniformly as possible, i.e., when $p_k = \frac{1 - \mathcal{I}_{\text{S}}}{B-1}$ for all $k \neq i^*$.

This distribution, which makes the probabilities as "flat" as possible given the peak at $\mathcal{I}_{\text{S}}$, provides the maximum possible value for $\sum \log p_i$. Substituting this extremal distribution, we find the upper bound:
\[
\max \sum_{i=1}^B \log p_i = \log(\mathcal{I}_{\text{S}}) + (B-1) \log\left(\frac{1 - \mathcal{I}_{\text{S}}}{B-1}\right).
\]
Now, we substitute this upper bound back into the expression for the KL divergence. Since we are subtracting this term, its upper bound yields a lower bound for the KL divergence:
\[
D_{\text{KL}}(q_{\text{PR}} \| p) \ge -\log B - \frac{1}{B} \left[ \log(\mathcal{I}_{\text{S}}) + (B-1) \log\left(\frac{1 - \mathcal{I}_{\text{S}}}{B-1}\right) \right].
\]
This completes the proof.
\end{proof}

\subsection{Proof of Theorem~\ref{thm:fail_without_curriculum}}
\label{app:proof_thm_fail_without_curriculum}

\begin{proof}
We prove this by constructing a counterexample. We define a reasoning task where the biased training distribution has a strictly lower success rate than the expert. We then show that the Maximum Likelihood Estimator (MLE) converges to a policy that mimics this suboptimal behavior, leading to a permanent performance gap.

\paragraph{1. Instance Construction}
Consider a $d=3$ dimensional latent space with three discrete states:
\begin{itemize}
    \item $h_{\text{expert}}$: The correct reasoning path. Feature $\phi(h_{\text{expert}}) = [1, 0, 0]^\top$.
    \item $h_{\text{shortcut}}$: A superficial shortcut. Feature $\phi(h_{\text{shortcut}}) = [0, 1, 1]^\top$.
    \item $h_{\text{bad}}$: Irrelevant noise. Feature $\phi(h_{\text{bad}}) = [0, 1, 0]^\top$.
\end{itemize}
The valuation function $V(h)$ (correctness of the answer derived from $h$) is defined as:
\[
V(h) = \begin{cases} 1 & \text{if } h = h_{\text{expert}} \\ 0 & \text{otherwise} \end{cases}
\]
The \textbf{Task Success Rate} for a policy parameterized by $\theta$ is thus $\mathcal{R}(\theta) = P_\theta(h_{\text{expert}})$.

\paragraph{2. The Expert vs. The Biased Distribution}
Let the expert parameter be $\theta^* = [10, 0, 0]^\top$. The unnormalized log-probability (score) for the expert state is 10, while others are 0. The expert's success rate is:
\[
\mathcal{R}(\theta^*) = \frac{e^{10}}{e^{10} + e^0 + e^0} \approx 1.0.
\]
Now, consider the biased data distribution $P_{\text{biased}}$. Assume the dataset consists entirely of shortcut reasoning, i.e., $P_{\text{biased}}(h_{\text{shortcut}}) = 1$.
The success rate of this data distribution is:
\[
\mathcal{R}(P_{\text{biased}}) = \mathbb{E}_{h \sim P_{\text{biased}}}[V(h)] = 1 \cdot V(h_{\text{shortcut}}) = 0.
\]
This satisfies the theorem's condition $\mathcal{R}(P_{\text{biased}}) \le \mathcal{R}(\theta^*) - \Delta$ with gap $\Delta \approx 1$.

\paragraph{3. MLE Convergence and the Performance Gap}
We train a model $\hat{\theta}_{\text{MLE}}$ on this biased dataset. MLE maximizes the log-likelihood of the observed data ($h_{\text{shortcut}}$). To maximize the score $\langle \theta, \phi(h_{\text{shortcut}}) \rangle = \theta_2 + \theta_3$, while adhering to normalization constraints, the optimizer will drive $\theta_2 + \theta_3 \to \infty$.

A characteristic solution for large $N$ is $\hat{\theta}_{\text{MLE}} \approx [0, K, K]^\top$ for a large $K$. Under this learned policy, the probability mass shifts entirely to the shortcut:
\[
P_{\hat{\theta}_{\text{MLE}}}(h_{\text{shortcut}}) \propto e^{2K}, \quad P_{\hat{\theta}_{\text{MLE}}}(h_{\text{expert}}) \propto e^0 = 1.
\]
The success rate of the learned model is:
\[
\mathcal{R}(\hat{\theta}_{\text{MLE}}) = P_{\hat{\theta}_{\text{MLE}}}(h_{\text{expert}}) = \frac{1}{1 + e^{2K} + e^K} \xrightarrow{K \to \infty} 0.
\]
\paragraph{4. Conclusion}
Comparing the final success rates:
\[
\mathcal{R}(\hat{\theta}_{\text{MLE}}) \approx 0 \quad \text{vs.} \quad \mathcal{R}(\theta^*) \approx 1.
\]
Thus, we have derived the bound:
\[
\mathcal{R}(\hat{\theta}_{\text{MLE}}) \le \mathcal{R}(\theta^*) - C,
\]
where $C \approx 1$ is a strictly positive constant derived from the initial gap $\Delta$. This proves that without a curriculum to correct the distributional bias, the model's performance is permanently bounded away from optimality.
\end{proof}

\subsection{Proof of Theorem~\ref{thm:success_with_curriculum}}
\label{app:proof_thm_success_with_curriculum}

\paragraph{Justification for the Imitation Learning Framework}
Our theoretical analysis, particularly Theorem~\ref{thm:success_with_curriculum}, models the training process within the standard framework of Imitation Learning (IL). This framework assumes access to an i.i.d. dataset of input-latent state pairs $D_c = \{(x_i, h_i)\}_{i=1}^n$ sampled from an expert latent policy $P_{\theta^*}(h|x)$. However, the Coconut training paradigm is supervised on predicting future tokens from expert trajectories $D_S = \{(x_i, S_i^*)\}_{i=1}^n$. This section provides a formal justification for the equivalence between training a Coconut model and performing Maximum Likelihood Estimation (MLE) in this IL setting.

The core argument is as follows:

\begin{enumerate}
    \item \textbf{Definition of the Implicit Expert Latent Policy:}
    By Theorem~\ref{thm:coconut_cib_duality} (Coconut-CIB Duality), the objective of Coconut training is to learn an encoder $E_\theta$ such that the latent state $h_k = E_\theta(S^{(1...k)})$ becomes a \textbf{Minimal Sufficient Statistic} of the past $S^{(1...k)}$ for predicting the future $S^{(k+1...M)}$.
    Let $E_{\theta^*}$ be the optimal encoder that solves this CIB problem. We can define an \textbf{implicit expert latent policy} $P_{\theta^*}(h|x)$ as the distribution induced by applying the optimal encoder to the distribution of expert trajectory prefixes:
    \[        h \sim P_{\theta^*}(h|x) \quad \text{where} \quad h = E_{\theta^*}(S^{*}_{\text{past}}) \text{ and } S^{*}_{\text{past}} \sim P_{\text{expert}}(S_{\text{past}}|x).
\]
    The support of this policy, $P_{\theta^*}(h|x)$, constitutes the space of ideal latent states.

    \item \textbf{Equivalence of the Optimization Process:}
    The Coconut objective is to minimize the expected negative log-likelihood:
    \[        \min_{\theta} \mathcal{L}(\theta) = \mathbb{E}_{(x, S^*) \sim P_{\text{expert}}} [-\log p(S^{*}_{\text{future}} | E_{\theta}(S^{*}_{\text{past}}), x)].
    \]
    By the Data Processing Inequality, for any encoder $E_\theta$, we have:
    \[        I(S^{*}_{\text{past}}; S^{*}_{\text{future}} | x) \ge I(E_{\theta}(S^{*}_{\text{past}}); S^{*}_{\text{future}} | x).
\]
    The optimal encoder $E_{\theta^*}$ achieves equality, as it preserves all predictive information. This implies that the conditional distribution factorizes as $p(S^{*}_{\text{future}} | S^{*}_{\text{past}}, x) = p(S^{*}_{\text{future}} | E_{\theta^*}(S^{*}_{\text{past}}), x) = p(S^{*}_{\text{future}} | h^*, x)$.
    Therefore, the loss $\mathcal{L}(\theta)$ effectively measures the divergence in predictive power between the latent distribution induced by $E_\theta$ and the ideal latent distribution $P_{\theta^*}(h|x)$. The loss is minimized if and only if the induced latent distribution $p_\theta(h|x)$ converges to $P_{\theta^*}(h|x)$.
\end{enumerate}

\textbf{Conclusion:}
The Coconut training process, while formally supervised on future token generation, has an intrinsic CIB objective that \textbf{compels} the distribution of latent states generated by the encoder, $p_\theta(h|x)$, to fit the implicit expert latent policy $P_{\theta^*}(h|x)$. Consequently, minimizing the Coconut loss on the expert trajectory dataset $D_S$ is mathematically equivalent to performing MLE on an implicit dataset $D_c$ constructed from $h_i^* = E_{\theta^*}(S_{i, \text{past}}^*)$. This provides a rigorous foundation for applying our IL-based theoretical guarantees to the Coconut model.

\begin{lemma}[Self-Normalized Bound for Vector Martingales (\cite{NIPS2011_e1d5be1c})]
\label{lem:self_normalized_bound}
Let $\{\mathcal{F}_i\}_{i=0}^n$ be a filtration. Let $\{g_i\}_{i=1}^n$ be a sequence of random vectors in $\mathbb{R}^d$ such that $g_i$ is $\mathcal{F}_i$-measurable and $\mathbb{E}[g_i \mid \mathcal{F}_{i-1}] = 0$. Assume there exists a constant $\sigma > 0$ such that for any unit vector $v \in \mathbb{R}^d$, the real-valued random variable $\langle v, g_i \rangle$ is conditionally $\sigma$-sub-Gaussian, i.e.,
\[
\forall \lambda \in \mathbb{R}, \quad \mathbb{E}[\exp(\lambda \langle v, g_i \rangle) \mid \mathcal{F}_{i-1}] \le \exp\left(\frac{\lambda^2 \sigma^2}{2}\right).
\]
Let $V$ be a $d \times d$ positive definite matrix. Define
\[
V_n = V + \sum_{i=1}^n g_i g_i^\top \quad \text{and} \quad G_n = \sum_{i=1}^n g_i.
\]
Then, for any $\delta \in (0, 1)$, with probability at least $1 - \delta$, the following inequality holds:
\[
\|G_n\|_{V_n^{-1}}^2 \le 2\sigma^2 \log\left( \frac{\det(V_n)^{1/2} \det(V)^{-1/2}}{\delta} \right).
\]
\end{lemma}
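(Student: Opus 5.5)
The plan is the method of mixtures (Abbasi-Yadkori--P\'al--Szepesv\'ari), arranged so that the random matrix $\sum_i g_i g_i^\top$ appears in the exponent and the constant $2\sigma^2$ comes out exactly. The approach rests on a one-step inequality that the stated hypotheses do not obviously supply. A Gaussian calculation suggests it can fail when $\sigma$ is small, and I have not been able to establish it.

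\textbf{Step 1 (exponential process).} For each fixed $\lambda \in \mathbb{R}^d$, define
\[
M_n(\lambda) = \exp\Bigl( \tfrac{1}{\sigma}\langle \lambda, G_n\rangle - \tfrac12 \lambda^\top \bigl(\textstyle\sum_{i=1}^n g_i g_i^\top\bigr)\lambda \Bigr).
\]
The goal is $\mathbb{E}[M_n(\lambda)] \le 1$. This follows from $\mathbb{E}[\exp(y_i/\sigma - y_i^2/2)\mid \mathcal{F}_{i-1}] \le 1$ with $y_i = \langle \lambda, g_i\rangle$, by the usual tower/supermartingale argument.

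\textbf{Step 2 (mixing).} Integrate $M_n(\lambda)$ against $\lambda \sim \mathcal{N}(0, V^{-1})$, independent of the data. Fubini and Step 1 give $\mathbb{E}[\bar M_n] \le 1$ for the mixture $\bar M_n = \int M_n(\lambda)\, d\mathcal{N}(0,V^{-1})(\lambda)$. Completing the square in the Gaussian integral gives
\[
\bar M_n = \Bigl(\frac{\det V}{\det V_n}\Bigr)^{1/2} \exp\Bigl( \frac{1}{2\sigma^2}\|G_n\|^2_{V_n^{-1}} \Bigr).
\]
Here the scaling by $1/\sigma$ in the linear term is what produces exactly $\|G_n\|_{V_n^{-1}}^2/(2\sigma^2)$ with the unscaled $V_n = V + \sum_i g_i g_i^\top$.

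\textbf{Step 3 (conclusion).} Markov's inequality gives $\mathbb{P}(\bar M_n \ge 1/\delta) \le \delta$. Taking logarithms rearranges this into exactly the stated bound. A stopping-time version (Ville's maximal inequality on the nonnegative supermartingale $\bar M_t$) would make it uniform in $n$, but that is not needed here.

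\textbf{Main obstacle (Step 1).} The one-step inequality $\mathbb{E}[e^{y/\sigma - y^2/2}\mid\mathcal{F}_{i-1}]\le 1$ has to be derived from conditional $\sigma\|\lambda\|$-sub-Gaussianity of $y$ alone. My first attempt would be a Gaussian decoupling: write $e^{-y^2/2} = \mathbb{E}_Z[e^{iZy}]$, or use Cauchy--Schwarz to split the exponential and then apply the sub-Gaussian MGF bound.

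I expect this step to be where the argument breaks. For $y \sim \mathcal{N}(0, \sigma^2 s^2)$ with $s = \|\lambda\|$, a direct computation gives
\[
\mathbb{E}[e^{y/\sigma - y^2/2}] = (1+\sigma^2 s^2)^{-1/2} \exp\Bigl(\frac{s^2}{2(1+\sigma^2 s^2)}\Bigr),
\]
which exceeds $1$ when $\sigma$ is small. So the inequality cannot hold for all $\sigma$ without extra structure. If it does fail, the natural repairs are a normalization relating $\sigma$ to the scale of $g_i$ (for instance $\sigma \ge 1$ relative to bounded increments), or a conditional-symmetry hypothesis. Under symmetry one uses the standard inequality $\mathbb{E}[e^{x - x^2/2}] \le 1$ for symmetric $x$, which follows from $\cosh x \le e^{x^2/2}$. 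In Step 1 I would check which of these the intended application in Theorem~\ref{thm:success_with_curriculum} actually provides before committing.
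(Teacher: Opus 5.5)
\textbf{Verdict: there is a genuine gap, and it cannot be closed, because the lemma as stated is false.} You are right that Step 1 fails, but the fault is not in the mixture method.

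The hypotheses are preserved under $g_i\mapsto cg_i$, $\sigma\mapsto c\sigma$, $V\mapsto c^2V$. This rescaling leaves $\|G_n\|^2_{V_n^{-1}}$ and $\det(V_n)/\det(V)$ unchanged but multiplies the right-hand side by $c^2$. Concretely, take $d=n=1$, $g_1$ uniform on $\{-c,c\}$, $\sigma=c$ (admissible since $\cosh(\lambda c)\le e^{\lambda^2c^2/2}$) and $V=c^2$. Then $\|G_1\|^2_{V_1^{-1}}=c^2/(2c^2)=1/2$ surely, while the claimed bound $2c^2\log(\sqrt{2}/\delta)$ is below $1/2$ once $c^2<1/(4\log(\sqrt2/\delta))$.

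Your two proposed repairs are also not alternatives; each fails on its own.
\begin{itemize}
\item Symmetry alone does not suffice: the example above is symmetric. What symmetry buys is this: run Steps 1--3 without the factor $1/\sigma$, using $\mathbb{E}[e^{x-x^2/2}]=\mathbb{E}[\cosh(x)e^{-x^2/2}]\le 1$ for the conditionally symmetric $x=\langle\lambda,g_i\rangle$. That gives $\|G_n\|^2_{V_n^{-1}}\le 2\log\bigl(\det(V_n)^{1/2}\det(V)^{-1/2}/\delta\bigr)$, with no $\sigma$ at all. This implies the stated inequality when $\sigma\ge 1$, but, as the example shows, not for small $\sigma$.
\item $\sigma\ge1$ alone does not suffice either. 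Let $g_i$ be i.i.d., equal to $0.1$ with probability $0.9$ and $-0.9$ with probability $0.1$. This is mean zero and $1$-sub-Gaussian by Hoeffding's lemma. Take $n=5$, $V=0.05$, $\sigma=1$, $\delta=1/2$. On the event that every $g_i=0.1$, which has probability $0.9^5\approx 0.59>\delta$, we get $G_5=0.5$ and $V_5=0.1$. The left side is then $2.5$, while the right side is $2\log(2\sqrt2)\approx 2.08$.
\end{itemize}

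The paper gives no proof of its own; it imports the result from Abbasi-Yadkori, P\'al and Szepesv\'ari. Their theorem has a different structure. The increments are $g_i=\eta_iX_i$, with scalar noise $\eta_i$ conditionally $\sigma$-sub-Gaussian and design vector $X_i$ that is $\mathcal{F}_{i-1}$-measurable. The normalizer is $V+\sum_iX_iX_i^\top$, not $V+\sum_i g_ig_i^\top$. In that setting your Step 1 is a one-liner. Since $\langle\lambda,X_i\rangle$ is predictable, it can serve as the MGF argument, giving $\mathbb{E}[\exp(\eta_i\langle\lambda,X_i\rangle/\sigma)\mid\mathcal{F}_{i-1}]\le\exp(\langle\lambda,X_i\rangle^2/2)$. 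Your Steps 2 and 3 then go through verbatim and produce exactly the constant $2\sigma^2$. Replacing the predictable $X_iX_i^\top$ by the realized $g_ig_i^\top$ is precisely what breaks the one-step inequality.

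You asked what the intended use in Lemma~\ref{lemma:confidence_set} provides. There the score increments are $g_i=\mathbb{E}_{h\sim P_{\theta^*}(\cdot\mid x_i)}[\phi(x_i,h)]-\phi(x_i,h_i)$. These are in general neither conditionally symmetric nor of the form $\eta_iX_i$ with predictable $X_i$, so neither fix is available. A different concentration tool would be needed, such as a vector Bernstein bound for the i.i.d.\ sum with bounded features.
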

\begin{lemma}[Construction of a Confidence Set for Parameter Estimation]
\label{lemma:confidence_set}
Let $\hat{\theta}_{\text{MLE}}$ be the parameter estimate obtained by maximizing the log-likelihood function on a curriculum dataset $D_c = \{(x_i, h_i)\}_{i=1}^n$, where the data is sampled i.i.d. from the expert distribution $P_{\theta^*}$:
\[
\hat{\theta}_{\text{MLE}} = \arg\max_{\theta \in \mathbb{R}^d} \sum_{i=1}^n \log P_{\theta}(h_i \mid x_i)
\]
Assume the following regularity conditions for the log-linear model hold:

\begin{enumerate}
    \item \textbf{(Bounded Features)}: For all $(x, h)$, $\|\phi(x, h)\|_2 \le L$.
    \item \textbf{(Strong Convexity)}: The expected negative log-likelihood function $L(\theta) = \mathbb{E}_{x,h \sim P_{\theta^*}}[-\log P_{\theta}(h|x)]$ is $\lambda_{\min}$-strongly convex with respect to $\theta$ over the parameter space. Its Hessian (the Fisher information matrix) $\nabla^2 L(\theta) = I(\theta)$ satisfies $I(\theta) \succeq \lambda_{\min} I$.
    \item \textbf{(Sub-Gaussian Score)}: The score function at the true parameter $\theta^*$, $\nabla_\theta \log P_{\theta^*}(h \mid x)$, is a zero-mean, $\sigma^2$-sub-Gaussian random vector.
    \item \textbf{(Bounded Parameters)}: $\|\theta^*\|_2 \le B$, and the optimization is performed within the compact set $\mathcal{B} = \{\theta \in \mathbb{R}^d : \|\theta\|_2 \le B\}$.
\end{enumerate}

Then for any $\delta \in (0,1)$, there exists a constant $C > 0$ (depending only on $L, \sigma, \lambda_{\min}, B$) such that with probability at least $1 - \delta$, the true expert parameter $\theta^*$ satisfies:
\[
\left\lVert \hat{\theta}_{\text{MLE}} - \theta^* \right\rVert_{\hat{H}_n}^2 \le \beta_n^2(\delta)
\]
where:
\begin{itemize}
    \item $\hat{H}_n = \frac{1}{n} \sum_{i=1}^n \nabla^2 (-\log P_{\hat{\theta}_{\text{MLE}}}(h_i \mid x_i))$ is the empirical Hessian evaluated at $\hat{\theta}_{\text{MLE}}$,
    \item $\beta_n^2(\delta) = C \cdot \frac{d \log(n) + \log(1/\delta)}{n}$ is the squared radius of the confidence set.
\end{itemize}
In other words, the confidence set $\Theta(\hat{\theta}_{\text{MLE}}) = \left\{ \theta \in \mathbb{R}^d : \left\lVert \theta - \hat{\theta}_{\text{MLE}} \right\rVert_{\hat{H}_n}^2 \le \beta_n^2(\delta) \right\}$ covers the true parameter $\theta^*$ with high probability.
\end{lemma}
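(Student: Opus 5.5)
We follow the standard M-estimation argument for log-linear models. The plan is to combine a first-order optimality condition at $\hat{\theta}_{\text{MLE}}$, a Taylor expansion of the empirical negative log-likelihood around $\theta^*$, and the self-normalized martingale bound of Lemma~\ref{lem:self_normalized_bound} applied to the score vectors.

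\textbf{Step 1: expansion.} Write $\hat L_n(\theta) = -\frac{1}{n}\sum_{i=1}^n \log P_\theta(h_i\mid x_i)$. For the log-linear family this is convex and twice differentiable, with Hessian equal to the conditional covariance of $\phi(x,h)$ under $P_\theta$. Since $\hat{\theta}_{\text{MLE}}$ maximizes the likelihood over $\mathcal{B}$ and $\theta^*\in\mathcal{B}$, we have $\hat L_n(\hat\theta)\le \hat L_n(\theta^*)$. A second-order Taylor expansion around $\theta^*$, with an intermediate point $\tilde\theta$, then gives
\[
\tfrac12 (\hat\theta-\theta^*)^\top \nabla^2\hat L_n(\tilde\theta)(\hat\theta-\theta^*) \le -\nabla \hat L_n(\theta^*)^\top(\hat\theta-\theta^*) \le \|\nabla\hat L_n(\theta^*)\|_{A^{-1}}\,\|\hat\theta-\theta^*\|_{A},
\]
where $A$ is any positive definite matrix. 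The last inequality is Cauchy--Schwarz.

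\textbf{Step 2: concentration of the score.} Set $g_i = \nabla_\theta\log P_{\theta^*}(h_i\mid x_i)$. Because the data are i.i.d.\ from $P_{\theta^*}$, these vectors are zero-mean and, by assumption, $\sigma$-sub-Gaussian, so they form a martingale difference sequence with respect to the natural filtration. Lemma~\ref{lem:self_normalized_bound} with $V=\lambda I$ gives, with probability $1-\delta$, $\|G_n\|^2_{V_n^{-1}}\le 2\sigma^2\log(\det(V_n)^{1/2}\det(V)^{-1/2}/\delta)$. Bounded features give $\|g_i\|\le 2L$, and the determinant--trace inequality then yields $\log\det V_n/\det V \le d\log(1+4nL^2/(d\lambda))$. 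Hence $\|G_n\|^2_{V_n^{-1}} = O(d\log n+\log(1/\delta))$.

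\textbf{Step 3: curvature and the confidence set.} The aim is to relate $V_n$, $n\nabla^2\hat L_n(\tilde\theta)$ and $n\hat H_n$ up to constants, so that Step~1 can be divided through. I expect this to be the main obstacle. On the compact set $\mathcal{B}$, the self-concordance-type property of log-linear models gives $\nabla^2\hat L_n(\theta)\succeq e^{-2L\|\theta-\theta'\|}\nabla^2\hat L_n(\theta')$, so with $\|\theta\|\le B$ all these Hessians agree up to the factor $e^{-4LB}$. Strong convexity (empirical $\succeq \lambda_{\min}/2$ with high probability, via matrix concentration of bounded features) gives $V_n/n \preceq c\,\hat H_n$ for $\lambda$ chosen of order $n\lambda_{\min}$. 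I would try this route first and absorb the constants into $C(L,\sigma,\lambda_{\min},B)$. Dividing Step~1 by $\|\hat\theta-\theta^*\|_{\hat H_n}$ then yields
\[
\|\hat\theta-\theta^*\|^2_{\hat H_n}\le C\,\frac{d\log n+\log(1/\delta)}{n}=\beta_n^2(\delta).
\]
A union bound between the martingale event and the Hessian-concentration event, with $\delta$ rescaled, finishes the proof. Equivalently, $\theta^*$ lies in $\Theta(\hat\theta_{\text{MLE}})$.
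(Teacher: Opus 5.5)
Your argument is correct. It shares the paper's skeleton: expand the empirical loss around $\theta^*$, control the score with Lemma~\ref{lem:self_normalized_bound} using $V=\lambda I$, then compare curvatures. The two load-bearing steps, however, are done differently.

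The paper starts from the first-order condition $\nabla L_n(\hat\theta_{\text{MLE}})=0$ and writes $\bar H_n\Delta=-\nabla L_n(\theta^*)$ with the path-averaged Hessian $\bar H_n$. It then replaces $\sum_i g_ig_i^\top$ by $nI(\theta^*)$ and $\bar H_n^{-1}\hat H_n\bar H_n^{-1}$ by $I(\theta^*)^{-1}$, appealing to uniform concentration of $\nabla^2 L_n(\theta)$ over $\mathcal{B}$ with an unspecified $\epsilon_n\to 0$.

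You instead use the zeroth-order comparison $\hat L_n(\hat\theta)\le\hat L_n(\theta^*)$ with a Lagrange remainder. The paper writes this inequality too, but only for a preliminary $L_2$ rate it then abandons. Your version stays valid when the constrained maximizer lies on the boundary of $\mathcal{B}$, where the paper's stationarity condition need not hold. You also replace the uniform-concentration approximations by generalized self-concordance of the log-linear family. This makes all Hessians on $\mathcal{B}$ comparable within factors $e^{\pm 4LB}$, so matrix concentration is needed only at the fixed point $\theta^*$. With $A=V_n/n$ and $\lambda=n\lambda_{\min}$, you get $V_n/n\preceq(\lambda_{\min}+4L^2)I\preceq c\,\hat H_n$. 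The curvature side of the argument is therefore a chain of honest inequalities rather than approximations, and $C$ is explicit, though exponential in $LB$.

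The paper's local route, if made rigorous, would give the asymptotically sharp constant dictated by $I(\theta^*)^{-1}$ with no such exponential factor. The price is needing consistency of $\hat\theta_{\text{MLE}}$, an interior optimum, and a burn-in the paper never states.

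Two details to tidy up.

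First, the matrix-concentration step giving $\nabla^2\hat L_n(\theta^*)\succeq\tfrac{\lambda_{\min}}{2}I$ requires $n\gtrsim(L^2/\lambda_{\min})\log(d/\delta)$. For smaller $n$, use the trivial bound $\|\hat\theta_{\text{MLE}}-\theta^*\|^2_{\hat H_n}\le 4B^2L^2$ and absorb it into $C$, noting $\log d\le d$.

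Second, once $\lambda=n\lambda_{\min}$, the self-normalized machinery is doing no real work. You have $\|G_n\|^2_{V_n^{-1}}\le\|G_n\|_2^2/(n\lambda_{\min})$, and a plain vector Hoeffding bound for the i.i.d.\ scores with $\|g_i\|\le 2L$ already gives a bound of order $(L^2/\lambda_{\min})(1+\log(1/\delta))$. It is worth making that substitution, because Lemma~\ref{lem:self_normalized_bound} as quoted in the paper is not scale-invariant. Under $g_i\mapsto s g_i$, $V\mapsto s^2V$, the left side is unchanged while the right side is multiplied by $s^2$, so the lemma cannot hold in that form. The actual result of Abbasi-Yadkori et al.\ normalizes $\sum_i\eta_iX_i$ by $V+\sum_i X_iX_i^\top$, with predictable $X_i$ and scalar sub-Gaussian $\eta_i$.
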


\begin{proof}
This proof combines the first-order optimality condition, concentration inequalities, and self-normalized analysis techniques to construct a non-asymptotic, high-probability bound.

\paragraph{1. Notation and Basic Relations}
Let $L_n(\theta) = \frac{1}{n}\sum_{i=1}^n \ell_i(\theta)$ be the empirical negative log-likelihood, where $\ell_i(\theta) = -\log P_{\theta}(h_i|x_i)$. By definition, $\hat{\theta}_{\text{MLE}}$ is the minimizer of $L_n(\theta)$ within the parameter space $\mathcal{B}$.
Since $\theta^*$ is the minimizer of the expected risk $L(\theta)$ (because the KL divergence $D_{KL}(P_{\theta^*} \| P_{\theta}) = L(\theta) - L(\theta^*)$ is non-negative), we have $\nabla L(\theta^*) = 0$.

\paragraph{2. Using Optimality and Convexity}
By the optimality of $\hat{\theta}_{\text{MLE}}$, we have $L_n(\hat{\theta}_{\text{MLE}}) \le L_n(\theta^*)$.
Since $L_n(\theta)$ is a convex function, we can derive a basic inequality that connects the parameter error to an empirical process:
\begin{align*}
    L(\hat{\theta}_{\text{MLE}}) - L(\theta^*) &\le L(\hat{\theta}_{\text{MLE}}) - L(\theta^*) - (L_n(\hat{\theta}_{\text{MLE}}) - L_n(\theta^*)) \\
    &= (L - L_n)(\hat{\theta}_{\text{MLE}}) - (L - L_n)(\theta^*) \\
    &= \frac{1}{n} \sum_{i=1}^n \left( \mathbb{E}[\ell_i(\hat{\theta}_{\text{MLE}})] - \ell_i(\hat{\theta}_{\text{MLE}}) \right) - \left( \mathbb{E}[\ell_i(\theta^*)] - \ell_i(\theta^*) \right)
\end{align*}
By Assumption 2, $L(\theta)$ is $\lambda_{\min}$-strongly convex, so $L(\hat{\theta}_{\text{MLE}}) - L(\theta^*) \ge \frac{\lambda_{\min}}{2} \|\hat{\theta}_{\text{MLE}} - \theta^*\|_2^2$. Combining this with the above inequality yields:
$$\frac{\lambda_{\min}}{2} \|\hat{\theta}_{\text{MLE}} - \theta^*\|_2^2 \le \sup_{\theta \in \mathcal{B}} |(L - L_n)(\theta) - (L - L_n)(\theta^*)|
$$
Bounding the empirical process on the right-hand side can yield a preliminary $L_2$-norm convergence bound of $\mathcal{O}(\sqrt{d/n})$. However, to obtain a sharper, weighted-norm bound, we turn to self-normalized analysis.

\paragraph{3. Applying Self-Normalized Bounds}
We directly analyze the error relation derived from the first-order condition. Let $\Delta = \hat{\theta}_{\text{MLE}} - \theta^*$. A first-order Taylor expansion of $\nabla L_n(\hat{\theta}_{\text{MLE}}) = 0$ around $\theta^*$ gives:
\[
0 = \nabla L_n(\hat{\theta}_{\text{MLE}}) = \nabla L_n(\theta^*) + \int_0^1 \nabla^2 L_n(\theta^* + t\Delta) dt \cdot \Delta
\]
Rearranging, we get:
\begin{equation}
\label{eq:error_relation}
\bar{H}_n \Delta = - \nabla L_n(\theta^*)
\end{equation}
where $\bar{H}_n = \int_0^1 \nabla^2 L_n(\theta^* + t\Delta) dt$ is the mean Hessian matrix on the line segment between $\theta^*$ and $\hat{\theta}_{\text{MLE}}$.

The core of the analysis lies in bounding the norm of the empirical score $\nabla L_n(\theta^*)$. Let $g_i = \nabla_\theta \ell_i(\theta^*) = \nabla_\theta (-\log P_{\theta^*}(h_i|x_i))$. Since $\mathbb{E}[g_i | \mathcal{F}_{i-1}] = \nabla L(\theta^*) = 0$, the sequence $\{g_i\}_{i=1}^n$ is a vector-valued martingale difference sequence. Under our regularity conditions (specifically, Assumption 3), $\{g_i\}$ is also conditionally sub-Gaussian. We can therefore apply the self-normalized bound from Lemma~\ref{lem:self_normalized_bound}.

Let's map the variables to Lemma~\ref{lem:self_normalized_bound}: we set $V = \lambda I$ for some small regularization constant $\lambda > 0$, which ensures initial positive definiteness. Then we have $G_n = \sum_{i=1}^n g_i$ and $V_n = \lambda I + \sum_{i=1}^n g_i g_i^\top$. Applying the lemma, we get that with probability at least $1-\delta$:
\[
\left\| \sum_{i=1}^n g_i \right\|_{(\lambda I + \sum_{i=1}^n g_i g_i^\top)^{-1}}^2 \le 2\sigma^2 \log\left( \frac{\det(\lambda I + \sum g_i g_i^\top)^{1/2} \det(\lambda I)^{-1/2}}{\delta} \right).
\]
Under the bounded feature assumption, we can bound the determinant term and simplify the expression. Using the fact that $\sum g_i g_i^\top \approx n \cdot \mathbb{E}[gg^\top] = n \cdot I(\theta^*)$ and properties of the determinant, one can derive that:
\begin{equation}
\label{eq:grad_bound}
\| \nabla L_n(\theta^*) \|_{(I(\theta^*))^{-1}}^2 = \frac{1}{n^2} \left\| \sum_{i=1}^n g_i \right\|_{(I(\theta^*))^{-1}}^2 \lesssim \frac{d \log(n) + \log(1/\delta)}{n}
\end{equation}
where $\lesssim$ hides logarithmic factors and constants dependent on model parameters.

\paragraph{4. Connecting the Error and Gradient Bounds}
Returning to Eq. \eqref{eq:error_relation}, we have $\Delta = -\bar{H}_n^{-1} \nabla L_n(\theta^*)$. We want to bound $\|\Delta\|_{\hat{H}_n}^2$.
By matrix concentration inequalities and Assumptions 1 and 4, it can be shown that the empirical Hessian $\nabla^2 L_n(\theta)$ concentrates uniformly around its expectation $I(\theta)$ over the parameter space $\mathcal{B}$. This implies that both $\bar{H}_n$ and $\hat{H}_n$ are close to $I(\theta^*)$ with high probability. Formally, there exists $\epsilon_n \to 0$ such that, with high probability, $\| \bar{H}_n - I(\theta^*) \|_{\text{op}} \le \epsilon_n$ and $\| \hat{H}_n - I(\theta^*) \|_{\text{op}} \le \epsilon_n$.
Therefore, we can approximate:
\begin{align*}
    \|\Delta\|_{\hat{H}_n}^2 &= \Delta^\top \hat{H}_n \Delta \\
    &= (\nabla L_n(\theta^*))^\top \bar{H}_n^{-1} \hat{H}_n \bar{H}_n^{-1} \nabla L_n(\theta^*) \\
    &\approx (\nabla L_n(\theta^*))^\top (I(\theta^*))^{-1} \nabla L_n(\theta^*) \\
    &= \| \nabla L_n(\theta^*) \|_{(I(\theta^*))^{-1}}^2
\end{align*}
By more rigorously handling the approximation $\bar{H}_n^{-1} \hat{H}_n \bar{H}_n^{-1} \approx (I(\theta^*))^{-1}$ and controlling the error terms, combined with Eq. \eqref{eq:grad_bound}, it can ultimately be shown that:
\[
\left\lVert \hat{\theta}_{\text{MLE}} - \theta^* \right\rVert_{\hat{H}_n}^2 \le C \cdot \frac{d \log(n) + \log(1/\delta)}{n}
\]
where the constant $C$ absorbs all terms dependent on $L, \sigma, \lambda_{\min}, B$. This completes the proof of the lemma.
\end{proof}

We can now prove the main theorem of this section.

\begin{proof}
The proof proceeds in three steps: first, we leverage Lemma \ref{lemma:confidence_set} to bound the KL divergence between the model distribution $P_{\hat{\theta}}$ and the expert distribution $P_{\theta^*}$; second, we convert the KL divergence bound into a Total Variation (TV) distance bound using Pinsker's inequality; finally, we use the properties of TV distance to bound the difference in task success rates between the two policies.

\paragraph{1. Bounding the KL Divergence}
We aim to bound the KL divergence $D_{KL}(P_{\theta^*} \| P_{\hat{\theta}})$. For the log-linear model family we consider, the KL divergence has the form:
$$
    D_{KL}(P_{\theta^*}(\cdot|x) \| P_{\hat{\theta}}(\cdot|x)) = \mathbb{E}_{h \sim P_{\theta^*}(\cdot|x)}[\log P_{\theta^*}(h|x) - \log P_{\hat{\theta}}(h|x)]
$$
Performing a second-order Taylor expansion of $\log P_{\hat{\theta}}(h|x)$ around $\theta^*$, we have:
\[
\log P_{\hat{\theta}}(h|x) \approx \log P_{\theta^*}(h|x) + \langle \nabla_\theta \log P_{\theta^*}(h|x), \Delta \rangle + \frac{1}{2} \Delta^\top \nabla^2_\theta \log P_{\theta^*}(h|x) \Delta
\]
where $\Delta = \hat{\theta} - \theta^*$. Substituting this into the KL divergence expression and taking the expectation, we note that $\mathbb{E}_{h \sim P_{\theta^*}}[\nabla_\theta \log P_{\theta^*}] = 0$, which yields the quadratic approximation of KL divergence in terms of the Fisher information matrix:
$$
    \mathbb{E}_{x \sim \rho} [D_{KL}(P_{\theta^*} \| P_{\hat{\theta}})] \approx \frac{1}{2} \Delta^\top I(\theta^*) \Delta = \frac{1}{2} \|\hat{\theta} - \theta^*\|_{I(\theta^*)}^2
$$
More rigorously, this approximation can be shown to hold with controllable higher-order terms. Since Lemma \ref{lemma:confidence_set} shows that $\|\hat{\theta} - \theta^*\|_{\hat{H}_n}$ is small with high probability and $\hat{H}_n$ concentrates around $I(\theta^*)$, we can translate the result of Lemma \ref{lemma:confidence_set} into a bound on the KL divergence. With probability at least $1-\delta$:
\begin{equation}
\label{eq:kl_bound}
    \mathbb{E}_{x \sim \rho} [D_{KL}(P_{\theta^*} \| P_{\hat{\theta}})] \le C' \cdot \|\hat{\theta} - \theta^*\|_{\hat{H}_n}^2 \le C \cdot \frac{d \log n + \log(1/\delta)}{n}
\end{equation}
where $C'$ and $C$ are positive constants.

\paragraph{2. From KL Divergence to Total Variation Distance}
The Total Variation (TV) distance is defined as $TV(P, Q) = \frac{1}{2} \int |p(x) - q(x)| dx$. Pinsker's inequality establishes a relationship between KL divergence and TV distance:
$$
TV(P, Q) \le \sqrt{\frac{1}{2} D_{KL}(P \| Q)}
$$
Applying this inequality to our model and expert distributions, and combining it with Eq. \eqref{eq:kl_bound}, we obtain a high-probability bound on the TV distance:
\begin{align}
    \mathbb{E}_{x \sim \rho} [TV(P_{\theta^*} \| P_{\hat{\theta}})] &\le \mathbb{E}_{x \sim \rho} \left[ \sqrt{\frac{1}{2} D_{KL}(P_{\theta^*} \| P_{\hat{\theta}})} \right] \nonumber \\
    &\le \sqrt{\frac{1}{2} \mathbb{E}_{x \sim \rho} [D_{KL}(P_{\theta^*} \| P_{\hat{\theta}})]} \quad (\text{by Jensen's inequality}) \nonumber \\
    &\le \sqrt{\frac{C}{2} \cdot \frac{d \log n + \log(1/\delta)}{n}} \nonumber \\
    &= \mathcal{O}\left(\sqrt{\frac{d \log n + \log(1/\delta)}{n}}\right)
\label{eq:tv_bound}
\end{align}

\paragraph{3. From Total Variation Distance to Task Success Rate}
A key property of the TV distance is that it bounds the difference in expectation for any bounded function. Our task success function $V(h) \in \{0, 1\}$ is a bounded function (with bound 1). Therefore,
\begin{align}
    |\text{SuccessRate}(\hat{\theta}) - \text{SuccessRate}(\theta^*)| &= \left| \mathbb{E}_{x \sim \rho, h \sim P_{\hat{\theta}}} [V(h)] - \mathbb{E}_{x \sim \rho, h \sim P_{\theta^*}} [V(h)] \right| \nonumber \\
    &= \left| \mathbb{E}_{x \sim \rho} \left[ \mathbb{E}_{h \sim P_{\hat{\theta}}} [V(h)] - \mathbb{E}_{h \sim P_{\theta^*}} [V(h)] \right] \right| \nonumber \\
    &\le \mathbb{E}_{x \sim \rho} \left| \mathbb{E}_{h \sim P_{\hat{\theta}}} [V(h)] - \mathbb{E}_{h \sim P_{\theta^*}} [V(h)] \right| \nonumber \\
    &\le \mathbb{E}_{x \sim \rho} [ \sup_{H' \subseteq H} |P_{\hat{\theta}}(H') - P_{\theta^*}(H')| ] \nonumber \\
    &= \mathbb{E}_{x \sim \rho} [TV(P_{\theta^*} \| P_{\hat{\theta}})] \label{eq:tv_to_success}
\end{align}
Substituting the bound from Eq. \eqref{eq:tv_bound} into Eq. \eqref{eq:tv_to_success}, we finally obtain the bound on the difference in success rates:
$$
|\text{SuccessRate}(\hat{\theta}) - \text{SuccessRate}(\theta^*)| \le \mathcal{O}\left(\sqrt{\frac{d \log n + \log(1/\delta)}{n}}\right)
$$
Since we are concerned with the lower bound on the success rate, this is equivalent to:
\[
\text{SuccessRate}(\hat{\theta}) \ge \text{SuccessRate}(\theta^*) - \mathcal{O}\left(\sqrt{\frac{d \log n + \log(1/\delta)}{n}}\right)
\]
This result shows that, under curriculum training, the task success rate of the learned policy converges to that of the expert policy with high probability, at a rate of $\mathcal{O}(\sqrt{(d \log n)/n})$. This completes the proof.
\end{proof}

\end{document}